\newtheorem{theorem}{Theorem}[section]
\newtheorem{lemma}[theorem]{Lemma}
\newcommand{\mS}{\mathcal{S}}
\newcommand{\mA}{\mathcal{A}}
\newcommand{\mT}{\mathcal{T}}
\newcommand{\mO}{\mathcal{O}}
\newcommand{\aht}{\text{aht}}
\newcommand{\naht}{\text{naht}}
\title{N-Agent Ad Hoc Teamwork}
\author{%
  Caroline Wang \\
  Department of Computer Science \\
  The University of Texas at Austin \\
  \texttt{caroline.l.wang@utexas.edu} \\
  \And
  Arrasy Rahman \\
  Department of Computer Science \\
  The University of Texas at Austin \\
  \texttt{arrasy@cs.utexas.edu} \\
  \And
  Ishan Durugkar \\
  Sony AI \\
  \texttt{ishan.durugkar@sony.com} \\
  \And
  Elad Liebman\thanks{Work was done while at SparkCognition.} \\
  Amazon \\
  \texttt{liebelad@amazon.com} \\
  \And
  Peter Stone \\
  Department of Computer Science \\
  The University of Texas at Austin and Sony AI \\
  \texttt{pstone@cs.utexas.edu} \\
}
\begin{document}

\maketitle

\begin{abstract}
    Current approaches to learning cooperative multi-agent behaviors assume relatively restrictive settings.
    In standard fully cooperative multi-agent reinforcement learning, the learning algorithm controls \textit{all} agents in the scenario, while in ad hoc teamwork, the learning algorithm usually assumes control over only a \textit{single} agent in the scenario.
    However, many cooperative settings in the real world are much less restrictive.
    For example, in an autonomous driving scenario,  a company might train its cars with the same learning algorithm, yet once on the road, these cars must cooperate with cars from another company.   
    Towards expanding the class of scenarios that cooperative learning methods may optimally address, we introduce \textit{$N$-agent ad hoc teamwork} (NAHT), where a set of autonomous agents must interact and cooperate with dynamically varying numbers and types of teammates.
    This paper formalizes the problem, and proposes the \textit{Policy Optimization with Agent Modelling} (POAM) algorithm. POAM is a policy gradient, multi-agent reinforcement learning approach to the NAHT problem that enables adaptation to diverse teammate behaviors by learning representations of teammate behaviors. 
    Empirical evaluation on tasks from the multi-agent particle environment and StarCraft II shows that POAM improves cooperative task returns compared to baseline approaches, and enables out-of-distribution generalization to unseen teammates.
\end{abstract}

\section{Introduction}
\label{sec:intro}

Advances in multi-agent reinforcement learning (MARL) \citep{marl-book-albrecht24} have enabled agents to learn solutions to various problems in zero-sum games, social dilemmas, adversarial team games, and cooperative tasks \citep{silver2016mastering, brown-libratus18, koster_commonpool_2024, lin_tizero_2023, rashid18qmix}. 
Within MARL, cooperative multi-agent reinforcement learning (CMARL) is a paradigm for learning agent teams that solve a common task via interaction with each other and the environment \citep{littman1994markov, Panait2005CooperativeML, yuan2023survey}. 
Recent CMARL methods have been able to learn impressive examples of cooperative behavior from scratch in controlled settings, where all agents are controlled by the same learning algorithm \citep{rashid18qmix, son_qtran_2019, baker_emergent_2019}.
A related paradigm for learning cooperative behavior is ad hoc teamwork (AHT). In contrast to CMARL, the objective of AHT is to create a single agent policy that can collaborate with previously unknown teammates to solve a common task \citep{mirsky2022survey, stone2010ad}.

While a large and impressive body of work on CMARL and AHT exists, the current literature has largely examined scenarios in which either complete control over all agents is assumed, or only a single agent is adapted for cooperation \citep{rashid18qmix, son_qtran_2019, lin_tizero_2023, baker_emergent_2019, hu_simplified_2019}. Even learning methods for handling cooperative tasks in open multiagent systems~\citep{yuan2023survey}, which encompass one of the most challenging settings where agents may enter or leave the system anytime, either operate assuming full control over all agents~\citep{jiang2019graph, liu2021coach} or only a single adaptive agent~\citep{rahman_towards_2021, kakarlapudi2022decision, rahman2023general}.

However, real-world collaborative scenarios---e.g. search-and-rescue, or robot fleets for warehouses---might demand agent \textit{subteams} that are able to collaborate with unfamiliar teammates that follow different coordination conventions.
Towards producing agent teams that are more flexible and applicable to realistic cooperative scenarios, this paper formalizes the problem setting of \textbf{$N$-agent ad hoc teamwork} (NAHT), in which a set of autonomous agents  must interact with an uncontrolled set of teammates to perform a cooperative task. 
When there is only a single ad hoc agent, NAHT is equivalent to AHT. 
On the other hand, when all ad hoc agents are jointly trained by the same algorithm and there are no uncontrolled teammates, NAHT is equivalent to CMARL. 
Thus, the proposed problem setting generalizes both CMARL and AHT. 

Drawing from ideas in both CMARL and AHT, we introduce Policy Optimization with Agent Modelling (POAM). POAM is a policy-gradient based approach for learning cooperative multi-agent team behaviors, in the presence of varying numbers and types of teammate behaviors. It consists of (1) an agent modeling network that generates a vector characterizing teammate behaviors, and (2) an independent actor-critic architecture, which conditions on the learned teammate vectors to enable adaptation to a variety of potential teammate behaviors. 
Empirical evaluation on multi-agent particle environment (MPE) and StarCraft II tasks shows that POAM learns to coordinate with a changing number of teammates of various types, with higher competency than CMARL, AHT, and naive NAHT baseline approaches. An evaluation with out-of-distribution teammates also reveals that POAM's agent modeling module improves generalization to out-of-distribution teammates, compared to baselines without agent modeling. 

\begin{figure}[t]
    \centering
    \includegraphics[width=1.0\textwidth]{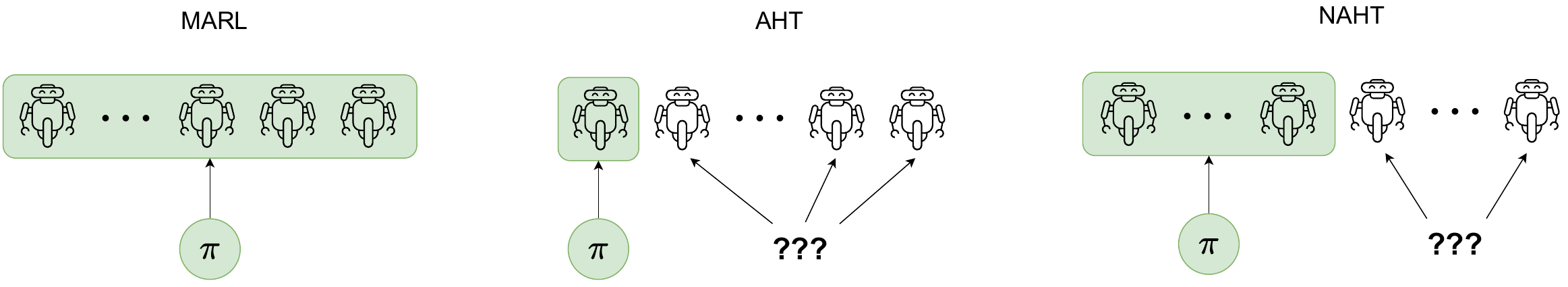}
    \caption{Left: CMARL algorithms  assume full control over all $M$ agents in a cooperative scenario. Center: AHT algorithms assume that only a single agent is controlled by the learning algorithm, while the other $M-1$ agents are uncontrolled and can have a diverse, unknown set of behaviors. Right: NAHT, the paradigm proposed by this paper, assumes that a potentially varying $N$ agents are controlled by the learning algorithm, while the remaining $M-N$ agents are uncontrolled. }
    \label{fig:paradigms}
\end{figure}

\section{Background and Notation}
\label{sec:background}
The NAHT problem is formulated within the framework of \textbf{Decentralized Partially Observable Markov Decision Processes}, or Dec-POMDPs  \citep{bernstein_complexity_2002}. 
A Dec-POMDP consists of $M$ agents, a state space $\mS$, action space $\mA$, per-agent observation spaces $O_i$, transition function $\mT: \mS \times \mA \mapsto \Delta(\mS)$ 
\footnote{$\Delta(S)$ denotes the space of probability distributions over set $S$.}, common reward function $r: \mS \times \mA \mapsto \Delta(\mathbb{R})$ (thus  defining a cooperative task), discount factor $\gamma \in [0, 1]$ and horizon $T \in \mathbb{Z}$, which represents the maximum length of an interaction, or episode. 
Each agent observes the environment via its observation function,  $\mO_i: \mS \times \mA \mapsto \Delta(O_i)$.
The state space is factored such that $\mS = \mS_1 \times \cdots \times \mS_M$, where $\mS_i$ for $i \in \{1 \cdots M\}$ corresponds to the state space for agent $i$. 
The action space is defined analogously. 
Denoting $H_{i}$ as its space of localized observation and action histories, agent $i$ acts according to a policy, $\pi_i: H_{i} \mapsto \Delta(\mA_i)$.
The notation $-i$ represents all agents other than agent $i$, and is applied throughout the paper to the mathematical objects introduced above. For example, the notation $\mO_{-i}$ refers to the cross product of the observation space of all agents other than $i$.
In the following, we overload $r$ to refer to both the reward function, and the task defined by that reward function, whereas $r_t$ denotes the reward at time step $t$.

\section{NAHT Problem Formulation}
\label{sec:problem_formulation}

Drawing from the goals of MARL and AHT \citep{stone2010ad}, the goal of $N$-agent ad hoc teamwork is \textbf{to create a \textit{set} of autonomous agents that are able to efficiently collaborate with both known and unknown teammates to maximize return on a task}.
The goal is formalized below. 

Let $C$ denote a set of ad hoc agents. 
If the policies of the agents in $C$ are generated by an algorithm, we say that the algorithm controls agents in $C$.  Since our intention is to develop algorithms for generating the policies of agents in $C$, we  refer to agents in $C$ as \textit{controlled}.
Let $U$ denote a set of \textit{uncontrolled} agents, which we define as all agents in the environment not included in $C$.\footnote{The original AHT problem statement used the terms ``known'' and ``unknown'' agents. However, it is common for modern AHT learning methods to assume some knowledge about the ``unknown'' teammates during training \citep{mirsky2022survey}. Thus, we instead employ the terms controlled and uncontrolled.}
Following \citeauthor{stone2010ad}, we assume that agents in $U$ are not adversarially minimizing the objective of agents in $C$.

We model an open system of interaction, in which a random selection of $M$ agents from sets $C$ and $U$ must coordinate to perform task $r$. 
For illustration, consider a warehouse staffed by robots developed by companies $A$ and $B$, where there is a box-lifting task that requires three robots to accomplish. If Company A's robots are controlled agents (corresponding to $C$), then some robots from Company A could collaborate with robots from Company B (corresponding to $U$) to accomplish the task, rather than requiring that all three robots come exclusively from  $A$ or $B$. 
Motivated thus, we introduce a \textit{team sampling procedure} $X(U, C)$. At the beginning of each episode, $X$ samples a team of $M$ agents by first sampling $N < M$, then sampling $N$ agents from the set $C$ and $M-N$ agents from $U$. 
We restrict consideration to teams containing at least one controlled agent, i.e $N \geq 1$.
We consider $X$ a problem parameter that is not under the control of any algorithm for generating ad hoc teammates, analogous to the transition function of the underlying Dec-POMDP. 
A more explicit definition of $X$ is provided in  Appendix \ref{app:team_sampling_procedure}.

Without loss of generality, let $C(\theta) = \{\pi_i^{\theta}(.|s)\}_{i=1}^{N}$ denote a \textit{set} of $M$ controlled agent policies parameterized by $\theta$, such that a learning algorithm might optimize for $\theta$. 
Let the $\bm{\pi}^{(M)}$ indicate a \textit{team} of $M$ agents 
and $\bm{\pi}^{(M)} \sim X(U, C)$ indicate sampling such a team from $U$ and $C$ via the team sampling procedure. 
The \textit{objective} of the NAHT problem is to find parameters $\theta$, such that $C(\theta)$ maximizes the expected return in the presence of teammates from $U$:

\begin{equation} \label{eq:obj}
\max_{\theta} \left(
\mathbb{E}_{\bm{\pi}^{(M)} \sim X(U, C(\theta))}\left[\sum_{t=0}^{T} \gamma^t r_t \right]
\right)
.
\end{equation}

Challenges of the NAHT problem include: (1)  coordinating with potentially unknown teammates (\textit{generalization}), and (2) coping with a varying number of uncontrolled teammates (\textit{openness}).

\section{The Need for Dedicated NAHT Algorithms}
\label{sec:motivation}

Having introduced the NAHT problem, a natural question to consider is whether AHT solutions may optimally address NAHT problems. 
If so, then there would be little need to consider the NAHT problem setting. 
For instance, a simple yet reasonable approach consists of directly using an AHT policy to control as many agents as required in an NAHT scenario. 
This section illustrate the limitations of the aforementioned approach by giving a concrete example of a matrix game where (1) an AHT policy that is learned in the AHT ($N=1$) scenario is unlikely to do well in an NAHT  scenario where $N=2$, and (2) even an \textit{optimal} AHT policy is suboptimal in the $N=2$ setting.

Define the following simple game for $M$ agents: at each turn, each agent $a_i$ picks one bit $b_i \in \{0, 1\}$; at the end of each turn, all the bits are summed $s = \sum_i b_i$. 
The team wins if the sum of the chosen bits is exactly $1$. We denote the probability of winning by $P(s=1)$.
Suppose the uncontrolled agents follow a policy that independently selects $1$ with probability $p = \frac{1}{M}$.\footnote{Lemma \ref{lemma:optimal-p-static} shows that $p=\frac{1}{M}$ is the optimal value of $p$ for a team composed of such agents.}
In the following, we consider the three agent case, $M=3$, for simplicity.

In the AHT problem setting, a learning algorithm assumes control of only a single agent. Let $p_{\aht}$  denote the probability with which the AHT agent selects 1. Given the aforementioned team of uncontrolled agents, we show that \textit{any} value of $p_{\aht}$ results in the same probability of winning, which occurs because the probability of winning, $P(s=1) = \frac49$, is independent of $p_{\aht}$ (Lemma \ref{lemma:aht-scenario}).

Next, consider an NAHT scenario where a learning algorithm must define the actions of two out of three agents. 
Suppose that the same AHT policy is used to control both agents: both agents select 1 with probability $p_{\aht}$. 
Above, we demonstrated that an AHT algorithm trained in the $N=1$ scenario could result in learning any $p_{\aht}$. 
However, in the $N=2$ setting, we show that the optimal AHT policy $p_{\aht} = \frac13$ and the winning probability for this policy is $P(s=1) = \frac49$ (Lemma \ref{lemma:aht-in-naht-setting}).

Finally, we show there exists an NAHT policy that controls both agents and obtains a higher winning probability.
Consider the policy where one controlled agent always plays 0, while the other plays 1 with probability $p_{\naht}$. 
Lemma \ref{lemma:two-player-setting} shows that the optimal $p_{\naht} = 1$, and the probability of winning $P(s=1) = \frac23 > \frac49$.
Thus, we have exhibited an NAHT scenario where an AHT policy that is optimal when $N=1$, performs worse than a simple NAHT joint policy in the $N=2$ setting. Empirical validation of the prior results are provided in Appendix \ref{app:supp_results:need_for_naht_exp}.

\section{Policy Optimization with Agent Modeling (POAM)}
\label{sec:method}

\begin{wrapfigure}{R}{0.55\textwidth} %
    \vspace{-10pt}
    \includegraphics[width=0.55\textwidth]{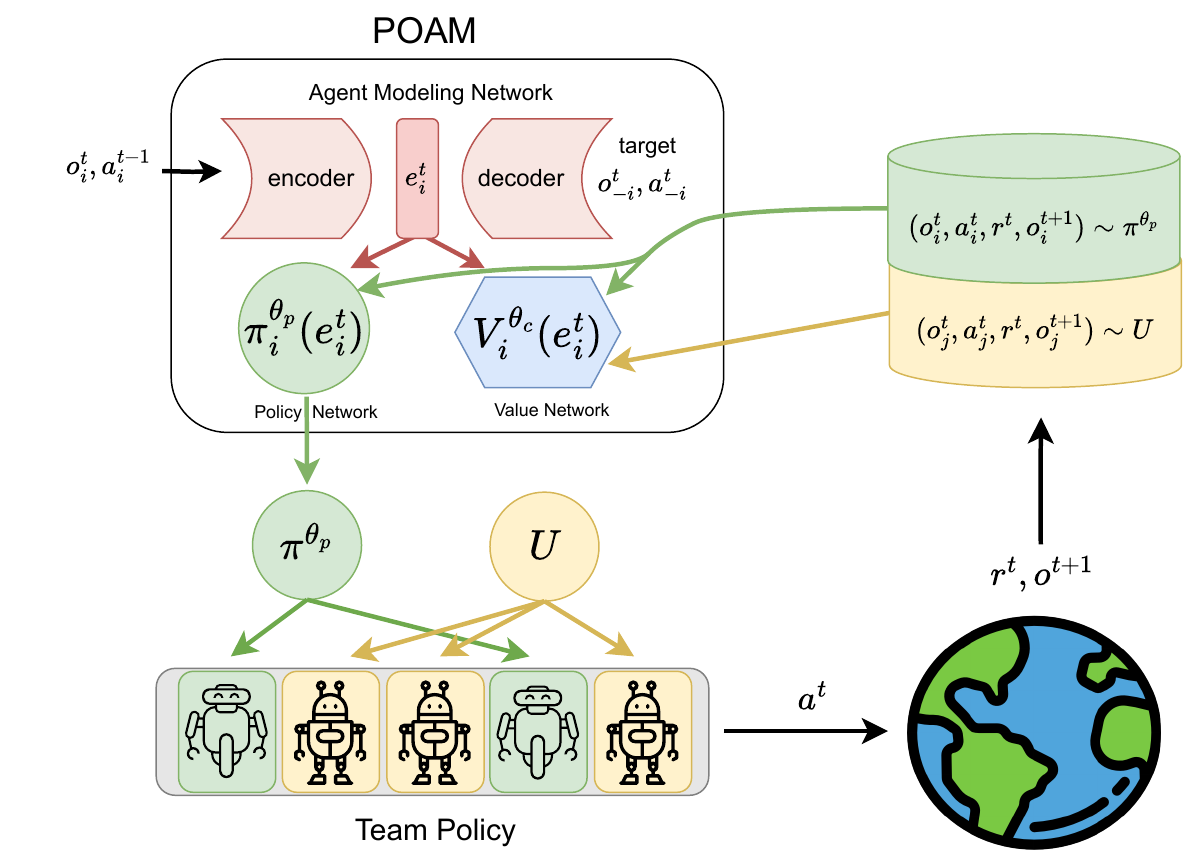}
    \caption{POAM trains a single policy network $\pi^{\theta_p}$, which characterizes the behavior of all controlled agents (green), while uncontrolled agents (yellow) are drawn from $U$. 
    Data from both controlled and uncontrolled agents is used to train the value network, $V^{\theta_c}_i$
    while the policy is trained on data from the controlled agents only. 
    The policy and value function are both conditioned on a learned team embedding vector, $e^t_i$. 
    }
    \vspace{-10pt}
    \label{fig:method}
\end{wrapfigure}

This section describes the proposed Policy Optimization with Agent Modeling (POAM) method, which trains a collection of NAHT agents that can adaptively deal with different collections of unknown teammates. POAM relies on an \textit{agent modeling network} to initially build an embedding vector characterizing teammates encountered during an interaction. Adaptive agent policies that can maximize the controlled agents' returns are then learned by training a \textit{policy} conditioned on the environment observation and team embedding vector. 
To enable controlling a varying number of agents while learning in a sample-efficient manner, POAM adopts the independent learning framework with full parameter sharing.
The training processes for agent modeling and policy networks are described in Sections~\ref{sec:AMNetwork} and~\ref{sec:PolNetwork} respectively, while an illustration of how POAM trains NAHT agents is provided in Figure~\ref{fig:method}. 

\subsection{Agent Modeling Network}
\label{sec:AMNetwork}

Designing adaptive policies that enable NAHT agents to achieve optimal returns against any team of uncontrolled agents drawn from some set $U$, requires information on the encountered team's unknown behavior. However, in the absence of prior knowledge about uncontrolled teammates' policies, local observations from a single timestep may not contain sufficient information regarding the encountered team. To circumvent this lack of information, POAM's agent modeling network plays a crucial role in providing \textit{team embedding vectors} that characterize the observed behavior of teammates in the encountered team.

We identify two main criteria for desirable team embedding vectors. First, team embedding vectors should identify information regarding the unknown state and behavior of other agents in the environment (both controlled and uncontrolled). Second, team embedding vectors should ideally be computable from the sequence of local observations and actions of the team. Fulfilling both requirements provides an agent with useful information for decision-making in NAHT problems, even under partial observability.

For each controlled agent, POAM produces informative team embedding vectors by training a model with an encoder-decoder architecture, illustrated by red components in Figure~\ref{fig:method}. 
For ease of presentation, the encoder-decoder models for controlled agent $i$ will be referred to without the index $i$.
The encoder, 
$f^{\text{enc}}_{\theta^{e}}: H_{i} \mapsto \mathbb{R}^{n}$, is parameterized by $\theta^{e}$ and processes the modeling agent's history of local observations and actions up to timestep $t$, $h_i^{t} = \{o_{i}^{k}, a_{i}^{k-1}\}_{k=1}^{t}$, to compute a team embedding vector of dimension $n$, $e^{t}_{i} \in \mathbb{R}^n$ that characterizes the modeled agents. This reliance on local observations helps ensure that the agent modeling network can operate without having access to the environment state that is unavailable under partial observability. 
The team embedding vector is decoded by two decoder networks: the observation decoder, $f^{\text{dec}}_{\theta^{o}}: \mathbb{R}^{n} \mapsto O_{-i}$ and the action decoder, $f^{\text{dec}}_{\theta^{a}}: \mathbb{R}^{n} \mapsto \Delta(A_{-i})$. The decoder networks are respectively trained to predict the observations and actions of all other agents on the team at timestep $t$, $(o_{-i}^{t},a_{-i}^{t})$, to encourage $e^{t}_{i}$ to contain relevant information for the current NAHT agent's decision-making process. 
While the observation decoder directly predicts the observed $-i$ observations, the action decoder predicts the parameters of a probability distribution over the $-i$ agents' actions, $p(a^t_{-i}; f^{\text{dec}}_{\theta^{a}}(f^{\text{enc}}_{\theta^{e}}(h_i^{t})))$, where an appropriate distribution for $p$ should be selected by the system designer.

Concretely, agent $i$'s encoder-decoder model is trained to minimize a maximum likelihood loss over all teammates' observations and actions, given its own local observations and actions. As the experimental setting in this paper considers continuous observations and discrete actions, the observation loss is a mean squared error loss, while the action loss is the negative log likelihood of the $-i$ agents' actions, under the Categorical distribution.

\begin{equation}
\label{Eq:AELossFunction}
    L_{\theta^{e},\theta^{o},\theta^{a}}(h_i^{t}, o_{-i}^t, a_{-i}^t) = \Big(||f^{\text{dec}}_{\theta^{o}}(f^{\text{enc}}_{\theta^{e}}(h_i^{t})) - o_{-i}^{t}||^{2} - \text{log}(p(a^{t}_{-i};f^{\text{dec}}_{\theta^{a}}(f^{\text{enc}}_{\theta^{e}}(h_i^{t}))))\Big).
\end{equation}

\subsection{Policy and Value Networks}
\label{sec:PolNetwork}

POAM relies on an actor-critic approach to train agent policies, where the policy and critic are both conditioned on the teammate embedding  described in Section~\ref{sec:AMNetwork}.

The policy network of agent $i$, $\pi_{i}^{\theta^{p}}: H_{i}\times\mathbb{R}^{n} \mapsto \Delta(A_{i})$, is parameterized by $\theta^p$, and uses the NAHT agent's local observation, $o^{t}_{i}$, and the team embedding from the encoder network, $e^{t}_{i}$, to compute a policy followed by the NAHT agents. 
Conditioning the policy network on $e^{t}_{i}$ allows an NAHT agent to change its behaviour based on the inferred characteristics of encountered agents. 
When training the policy network, we also rely on a value (or critic) network, $V_i^{\theta^{c}}: H_{i}\times\mathbb{R}^{n} \mapsto \mathbb{R}$, parameterized by $\theta^c$, which measures the expected returns given $h_i^{t}$, and $e^{t}_{i}$. The value network serves as a baseline to reduce the variance of the gradient updates, while conditioning on the learned teammate embeddings for similar reasons to the policy. 

POAM then trains the policy and value networks using an approach based on the Independent PPO algorithm~\citep{yu2021surprising} (IPPO). IPPO is selected as the base MARL algorithm for two reasons. First, using an independent MARL method circumvents the need to deal with the changing number of agents resulting from environment openness. Second, IPPO has been demonstrated to be effective on various MARL tasks. 
To improve learning efficiency and enable information sharing between agents, full parameter sharing is employed for all neural networks. 
POAM trains the value network to produce accurate state value estimates by minimizing the following loss function:
\begin{equation}
\label{Eq:ValueLossFunction}
    L_{\theta^{c}}(h_i^{t}) = \dfrac{1}{2}\Big(V_i^{\theta^{c}}(h_i^{t}, f^{\text{enc}}_{\theta^{e}}(h_i^{t}))-\hat{V_i}^{t}\Big)^2,
\end{equation}
where $\hat{V_i}^{t}$ is the TD($\lambda$) return.
The policy network is analogously trained to minimize the PPO loss function \citep{Schulman2017ProximalPO}, but where the policy additionally conditions on the team embeddings.

\paragraph{Leveraging data from uncontrolled agents}
In the NAHT setting, we assume access to the \textit{joint} observations and actions generated by the current team deployed in the environment \textit{at training time only}, where the team consists of a mix of controlled and uncontrolled agents. 
This assumption provides an opportunity to learn useful cooperative behaviors more quickly, by bootstrapping based on transitions from the initially more competent, uncontrolled teammate policies, as also observed by  \citet{rahman_towards_2021}.

POAM leverages data from both controlled and uncontrolled agents to train the value network---in effect, treating the uncontrolled agents as exploration policies. 
Note that this aspect of POAM is a significant departure from PPO, which is a fully on-policy algorithm.
Since the policy update is highly sensitive to off-policy data, only data from the controlled agents is used to train the policy network.

\section{Experiments and Results}
\label{sec:exp}

This section presents an empirical evaluation of POAM and baseline approaches across different NAHT problems. 
We investigate three questions and foreshadow the conclusions: 
\begin{itemize}[topsep=0pt,itemsep=0pt,
leftmargin=2pt,labelindent=0pt]
    \item[] Q1: Does POAM learn to cope with uncontrolled teammates with higher sample efficiency and asymptotic return than baselines? (Usually)
    \item[] Q2: Does POAM improve generalization to previously unseen and out-of-distribution teammates, compared to baselines? (Yes)
    \item[] Q3: Can we verify that the two key ideas of POAM---agent modelling and use of data from uncontrolled agents---work as desired and contribute positively towards POAM's performance? (Yes) 
\end{itemize}

Full implementation details, including hyperparameter values and additional empirical results, appear in the Appendix. Our code is available at \url{https://github.com/carolinewang01/naht}.

\subsection{Experimental Design}
\label{sec:ExpDesign}
In the following, we summarize the experimental design, including the particular NAHT problem instance, training procedure, experimental domain, and baselines. Details on evaluation metrics are provided in Appendix \ref{app:eval_details}.
\paragraph{A Practical Instantiation of NAHT}
Similarly to AHT, the NAHT problem can be parameterized by the amount of knowledge that controlled agents have about uncontrolled agents, and whether uncontrolled agents can adapt to the behavior of controlled agents \citep{barrett2012analysis}. 
Furthermore, as a direct result of the fact that an NAHT algorithm may control more than a single agent, the NAHT problem may also be parameterized by whether the controlled agents are homogeneous, whether they can communicate, and what the team sampling procedure is. 
While the fully general problem setting allows for heterogeneous, communicating, controlled agents that have no knowledge of the uncontrolled agents, as a first step, this paper focuses on a special case of the NAHT problem, where agents are homogeneous, non-communicating, may learn about uncontrolled agents via interaction, and where the team sampling procedure consists of a uniform random sampling scheme from $U$ and $C$ (see Appendix \ref{app:exp_team_sampling_proc} for details). This case is designed primarily to assess whether controlled \textit{subteams} may outperform independent controlled agents, when cooperating with multiple types of uncontrolled agents. 
We leave consideration of broader NAHT scenarios for future work. 

\paragraph{Generating Uncontrolled Teammates}
To generate a set of uncontrolled teammates $U$, the following MARL algorithms are used to train agent teams: VDN~\citep{sunehag18vdn}, QMIX~\citep{rashid18qmix}, IQL~\citep{Tan1997MultiAgentRL}, IPPO, and MAPPO~\citep{yu2021surprising}.
We verify that the generated team behaviors are diverse by checking that (1) teams trained by the same algorithm learn non-compatible coordination conventions, and (2) teams trained by different algorithms also learn non-compatible coordination conventions (Appendix \ref{app:supp:coord_conventions}). The emergence of diverse teammate behaviors from training agents using different MARL algorithms under different seeds aligns with the findings from  \citet{strouse_fcp_2022}.

Let $U_{train}$ denote the set of uncontrolled teammates used to train all (N)AHT methods. $U_{train}$ consists of five teams, where each individual team is trained via VDN, QMIX, IQL, IPPO and MAPPO, respectively. $U_{test}$ consists of a set of holdout teams trained via the same MARL algorithms, but that have not been seen during training. 
The experimental results reported in Sections \ref{sec:exp_core_results} and \ref{sec:poam_ablation} are computed with respect to $U_{train}$ only, while the experimental results in Section \ref{sec:ood_generalization} use $U_{test}$.

\paragraph{Experimental Domains}
Experiments are conducted on a predator-prey \texttt{mpe-pp} task implemented within the multi-agent particle environment ~\citep{lowe2017multi}, and the \texttt{5v6, 8v9, 10v11, 3s5z} tasks from the StarCraft Multi-Agent Challenge (SMAC) benchmark~\citep{samvelyan19smac}. 
On the \texttt{mpe-pp} task, three predators must cooperatively pursue a pretrained prey agent. The team receives a reward of +1 per time step that two or more predators collide with the prey. 
On the SMAC tasks, a team of allied agents must defeat a team of enemy agents controlled by the game server. For each task, the first number in the task name indicates the number of allied agents, while the second indicates the number of enemy agents. The team is rewarded for defeating enemies, with a large bonus for defeating all enemies.
See Appendix \ref{app:exp_setting} for full details.

\paragraph{Baselines}

As NAHT is a new problem proposed by this paper, there are no prior algorithms that are directly designed for the NAHT problem. 
Therefore, we construct three baselines to contextualize the performance of POAM.
All methods employ full parameter sharing~\citep{papoudakis_benchmarking_2021}.

\begin{itemize}[leftmargin=*]
    \item \textit{Naive MARL}: various well-known MARL algorithms are considered, including both independent and centralized training with decentralized execution algorithms~\citep{hernandez-leal_dmarl-survey_2019}. The algorithms evaluated here include IQL~\citep{claus_dynamics_1998}, VDN~\citep{sunehag18vdn}, QMIX~\cite{rashid18qmix}, IPPO, and MAPPO~\citep{yu2021surprising}. The MARL baselines are trained in self-play and then evaluated in the NAHT setting. In the following, only the performance of the \textit{best} naive MARL baseline is reported.
    \item \textit{Independent PPO in the NAHT setting} (IPPO-NAHT): IPPO is a policy gradient MARL algorithm that directly generalizes PPO~\citep{Schulman2017ProximalPO} to the multi-agent setting. It was found to be surprisingly effective on a variety of MARL benchmarks~\cite{yu2021surprising}. In contrast to the naive MARL baselines, IPPO-NAHT is trained using the NAHT training scheme presented in Section \ref{sec:ExpDesign}. The variant considered here employs full parameter sharing, where the actor is trained on data only from controlled agents, but the critic is trained using data from both controlled and uncontrolled agents. The latter detail is a key algorithmic feature which POAM also employs, but is an extension from the most naive version of PPO (see Section \ref{sec:poam_ablation}). IPPO can be considered an ablation of POAM, where the agent modeling module is removed.
    \item{\textit{POAM in the AHT setting} (POAM-AHT)}: As considered in Section \ref{sec:motivation}, a natural baseline approach to the NAHT problem is to use AHT algorithms that train only a single controlled agent, and copy these policies as many times as needed in the NAHT setting. To evaluate the intuition that AHT policies do not suffice for the NAHT problem setting, we consider an AHT version of POAM that is trained identically to POAM, but where the number of controlled agents is always one ($N = 1$) during training. 
    Note that POAM-AHT is equivalent to the AHT algorithm introduced by~\citet{papoudakis2020liam}, LIAM.
\end{itemize}

\subsection{Main Results}
\label{sec:exp_core_results}

\begin{figure}[t]
    \centering
    \begin{minipage}{\textwidth}
        \centering
        \includegraphics[width=\textwidth]{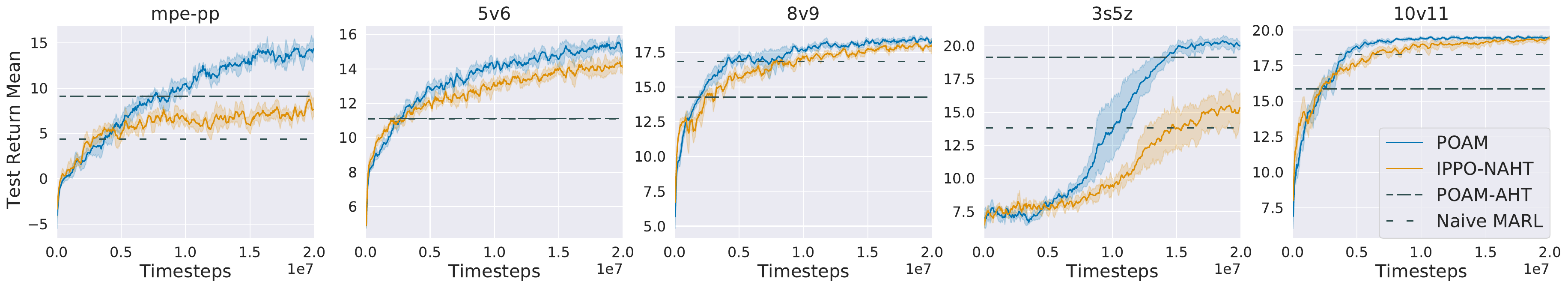}
    \end{minipage}
    \caption{POAM consistently improves over the baselines of IPPO-NAHT, POAM-AHT, and the best naive MARL baseline in all tasks, in either sample efficiency or asymptotic return. 
    }
    \label{fig:core_results}
    \vspace{-10pt}
\end{figure}

This section addresses Q1---that is, whether POAM learns to cope with uncontrolled teammates with greater sample efficiency or asymptotic returns, compared to baselines. 
Figure \ref{fig:core_results} shows the learning curves of POAM and IPPO-NAHT, and the test returns achieved by the best naive MARL baseline and POAM-AHT, on all tasks.\footnote{See the Appendix for tables providing the performances of all naive MARL methods, across all tasks.} 
All learning curves consist of the mean test returns across \textbf{five} trials, while the shaded regions reflect the \textbf{95\% confidence intervals}. 
 
We find that for all tasks, POAM outperforms baselines in  terms of asymptotic return for three out of five tasks (\texttt{mpe-pp, 5v6, 3s5z}).
For all tasks, POAM's initial sample efficiency is similar to that of IPPO-NAHT for the first few million steps of training, after which POAM displays higher return. 
We attribute the initial similarity in sample efficiency to the initial cost incurred by learning team embedding vectors, which once learned, improves learning efficiency.
IPPO-NAHT, which can be viewed as an ablation of POAM with no agent modeling, is the next best performing method. 
Although IPPO-NAHT has generally poorer sample efficiency than POAM, the method converges to approximately the same returns on two out of five tasks (\texttt{8v9} and  \texttt{10v11}).  
While the agent modeling module of POAM provides team embedding vectors to the policy learning process, the embeddings are themselves produced from each agent's own observation. Since no additional information is provided to POAM agents, it is unsurprising that IPPO-NAHT can converge to similar solutions as POAM, given enough training steps.

\begin{wrapfigure}{r}{0.35\textwidth} %
    \centering
    \vspace{-20pt}
    \includegraphics[width=0.35\textwidth]{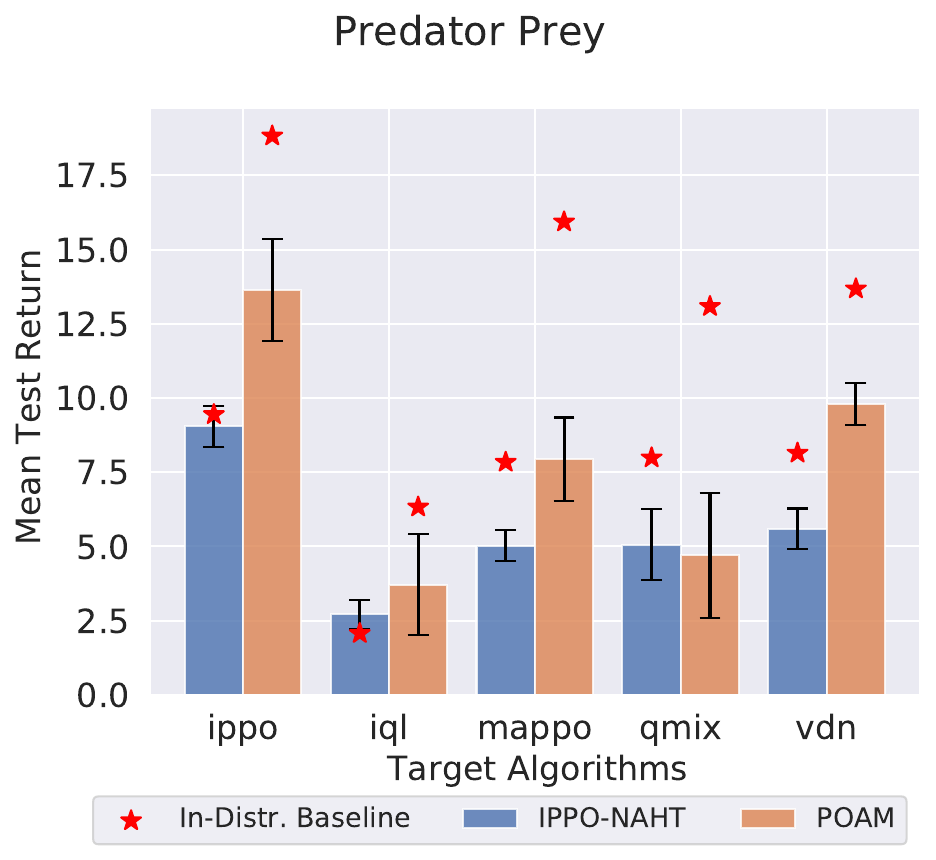}
    \caption{Test returns achieved by POAM and IPPO-NAHT, when paired with out-of-distribution teammates. POAM has improved generalization to OOD teammates, compared with IPPO-NAHT.}
    \label{fig:mpe-pp_ood_gen}
    \vspace{-30pt}
\end{wrapfigure}

Finally, while the best naive MARL baseline and POAM-AHT learn good solutions on some tasks, neither method consistently performs well across all tasks.
Overall, POAM discovers the most consistently performant policies compared to baseline methods, in a relatively sample-efficient manner. We conclude that (1) agent modeling improves learning efficiency, and (2) direct duplication of AHT-trained policies is less effective than methods that co-train agents for NAHT.

\subsection{Out of Distribution Generalization}
\label{sec:ood_generalization}

The AHT literature commonly assumes that uncontrolled teammates of interest may be interacted directly with during training \citep{papoudakis2020liam, barrett2017plastic, rahman_towards_2021}; 
experiments in the prior section were conducted under this assumption. 
However, in realistic scenarios, it may be challenging to enumerate all teammates likely to be encountered in the wild. 
This section examines the effectiveness of POAM under a true NAHT \emph{evaluation} scenario, where POAM agents must coordinate with teammates that were not available at training time and are out-of-distribution (OOD) (Q2).
Here, OOD teammates are created by running MARL algorithms with different random seeds than those used to generate train-time teammates. 

Figures~\ref{fig:mpe-pp_ood_gen} and \ref{fig:sc2_ood_gen} show the mean and 95\% confidence intervals of the test return achieved by POAM, compared to IPPO-NAHT, when the algorithm in question is paired with previously unseen seeds of IPPO, IQL, MAPPO, QMIX, and VDN. 
For each type of teammate, the performance of IPPO-NAHT/POAM against the exact teammates seen during training is shown as the in-distribution baseline.
Both POAM and IPPO-NAHT consistently exhibit reduced performance against the OOD teammates, compared to their respective in-distribution performances.
In three out of five tasks (\texttt{mpe, 5v6, 3s5z}), POAM has a significantly higher return than IPPO-NAHT, while the remaining two tasks exhibit a smaller improvement. 
In Appendix \ref{app:supp:alt_ood_gen}, similar findings are presented with an alternative OOD teammate generation strategy, where the set of five MARL algorithms used to generate uncontrolled teammates (IPPO, IQL, MAPPO, QMIX, VDN) are divided into train/test sets.

\subsection{A Closer Look at POAM}
\label{sec:poam_ablation}
Two key aspects of POAM are the teammate modeling module, and the use of data from uncontrolled agents to train the critic (Q3). We study the impact of both aspects on POAM's sample efficiency, focusing on the \texttt{mpe-pp} and \texttt{5v6} tasks. Results on \text{5v6} may be found in Appendix \ref{app:supp_results}.

\paragraph{Teammate modeling performance}
In the NAHT training/evaluation setting, a new set of teammates is sampled at the beginning of each episode.
Therefore, an important subtask for a competent NAHT agent is to rapidly model the distribution and type of teammates at the beginning of the episode, to enable the policy to exploit that knowledge as the episode progresses.
This task is especially challenging in the presence of partial observability (a property of the SMAC tasks). 
To address the above challenges, POAM employs a recurrent encoder, which encodes the POAM agent's history of observations and actions to an embedding vector, and a (non-recurrent) decoder network, which predicts the egocentric observations and action distribution for all teammates. 

\newpage

\begin{wrapfigure}{l}{0.53\textwidth} %
    \centering
    \includegraphics[width=0.53\textwidth]{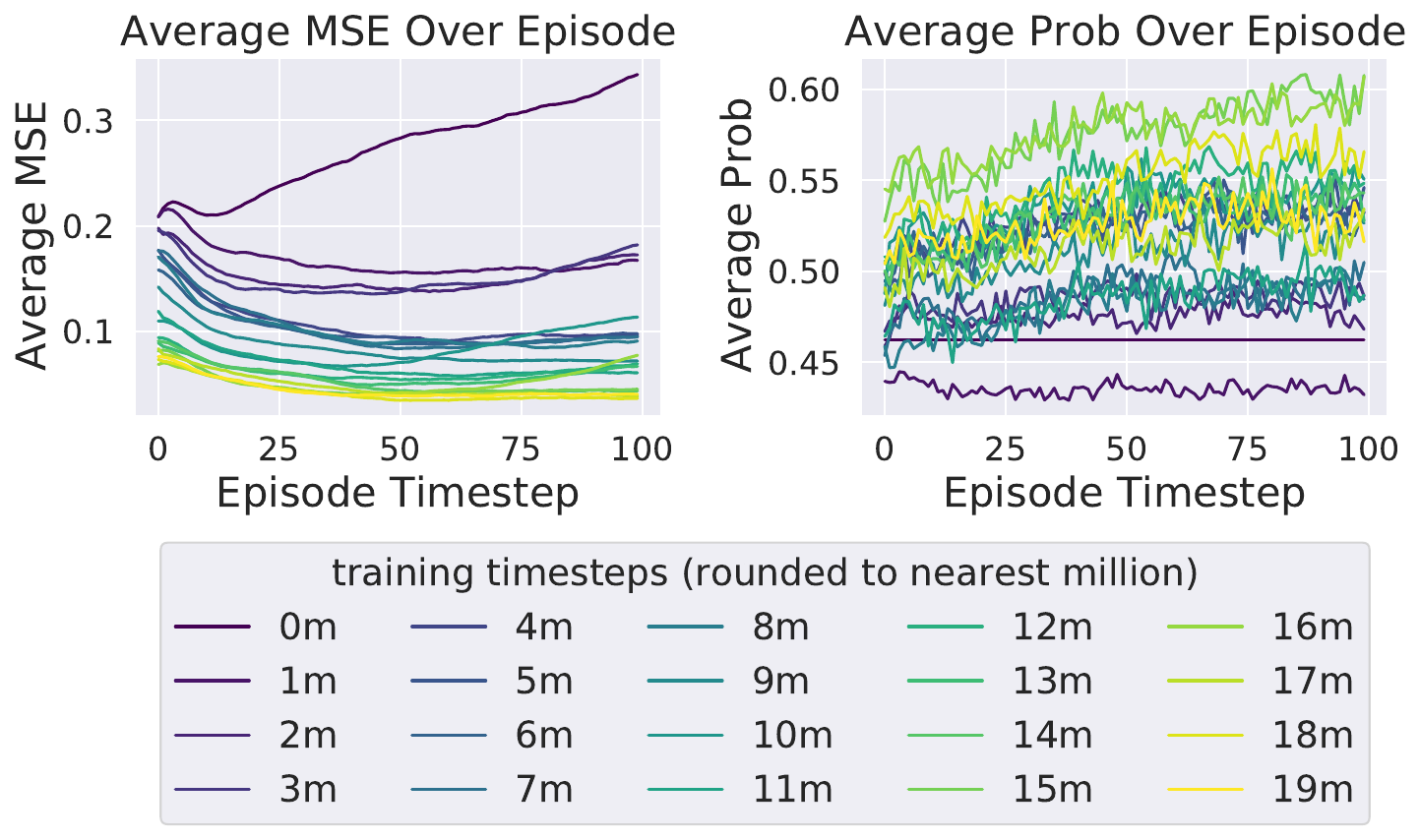}
    \caption{Evolution of a POAM agent's within-episode mean squared error (left) and within-episode probability of actions of modeled teammates (right), over the course of training on  \texttt{mpe-pp}.}
    \label{fig:mpe-pp_poam_within_episode}
\end{wrapfigure}

A natural question then, is whether the learned teammate embedding vectors actually improve over the course of an episode. 
Figures~\ref{fig:mpe-pp_poam_within_episode} and \ref{fig:5v6_poam_within_episode} depict the within-episode \textit{mean squared error} (MSE) of the observation predicted by the decoder, and the within-episode \textit{probability} of the action actually taken by the modeled teammates, according to the decoder's modeled action distribution. 
Each curve corresponds to a different training checkpoint of a single run.

For both tasks, we observe that the average MSE decreases over the course of training, while the average probability of the taken actions increases. For later training checkpoints, we observe that the average MSE actually decreases \textit{within} an episode, while the average probability increases, reflecting the increased confidence of the agent modelling module as more data is observed about teammates. 
Thus, we conclude that POAM is able to cope with the challenges introduced by the sampled teammates and partial observability, to learn accurate teammate models.

\paragraph{Impact of data from non-controlled agents}

Recall that both POAM and IPPO-NAHT update the value network using data from both the controlled and uncontrolled agents.
As Figures~\ref{fig:mpe-pp_critic_masking} and \ref{fig:5v6_critic_masking} show, this algorithmic feature results in a significant performance gain over training using on-policy data only, for both POAM and IPPO-NAHT.

\begin{wrapfigure}{r}{0.35\textwidth} %
    \vspace{-20pt}
    \centering
    \includegraphics[width=0.35\textwidth]{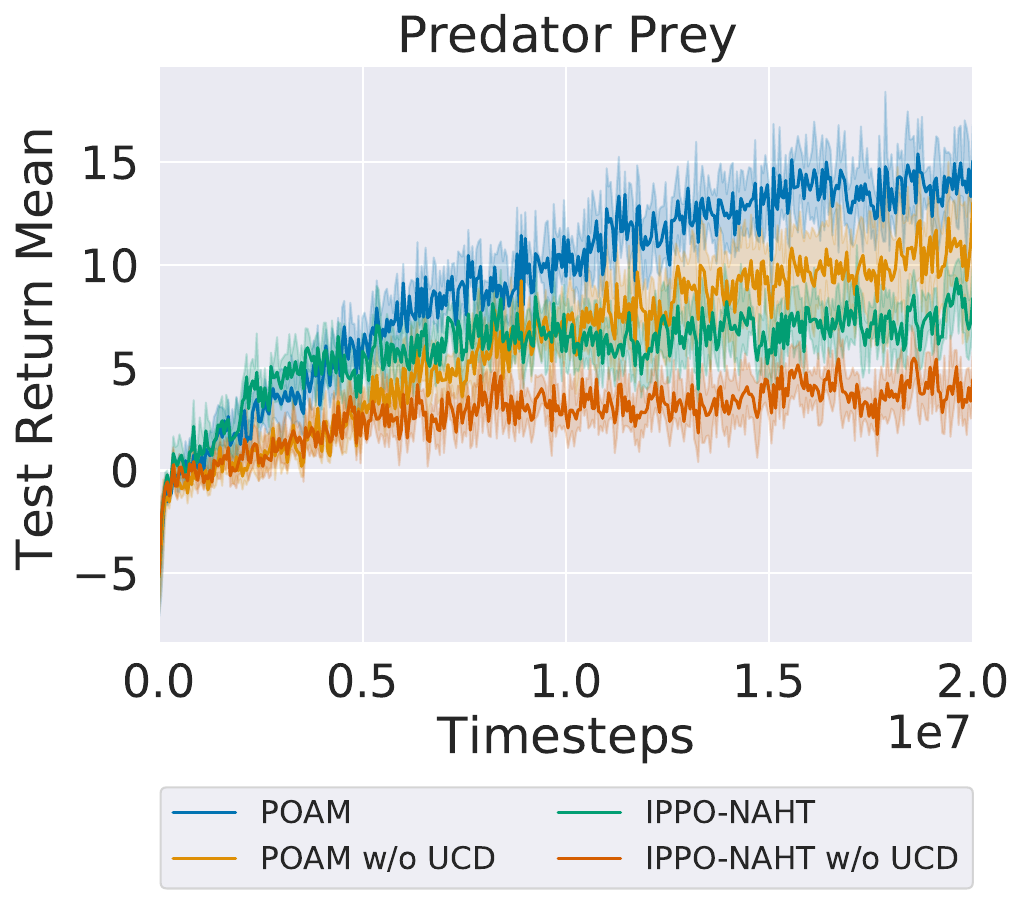}
    \caption{Learning curves of POAM and IPPO-NAHT, where the value network is trained w/w.o. \underline{uncontrolled agents' data} (UCD). }
    \label{fig:mpe-pp_critic_masking}
    \vspace{-20pt}
\end{wrapfigure}

\section{Related Works}
\label{sec:related_works}
This section summarizes literature in areas most closely related to NAHT, namely, ad hoc teamwork, zero-shot coordination, evaluation of cooperative capabilities, agent modeling, and CMARL.
\paragraph{Ad Hoc Teamwork \& Zero-Shot Coordination.} Prior works in ad hoc teamwork~\citep{stone2010ad} and zero-shot coordination (ZSC)~\citep{hu2020other} explored methods to design adaptive agents that can optimally collaborate with unknown teammates. While they both highly resemble the NAHT problem, existing methods for AHT~\citep{rahman_towards_2021, mirsky2022survey} and ZSC~\citep{hu2020other, lupu2021trajectory} have been limited to single-agent control scenarios. We argue that direct, naive applications of AHT and ZSC techniques to our problem of interest are ineffective---see the discussion in Section \ref{sec:motivation} and results in Section \ref{sec:exp}. 

Recent research in AHT and ZSC utilizes neural networks to improve agent collaboration within various team configurations. These recent works mostly focus on two approaches. The first approach trains the agent to adapt to unknown teammates by characterizing teammates' behavior as fixed-length vectors using neural networks and learning a policy network conditioned on these vectors~\citep{rahman_towards_2021, papoudakis2020liam, zintgraf2021deep}. The second designs teammate policies that maximize the agent's performance when collaborating with diverse teammates~\citep{lupu2021trajectory, rahman_towards_2021}. Our work builds on the first category, extending it to control multiple agents amid the existence of unknown teammates. While this also offers a potential path for robust NAHT agents, designing teammate policies for training will be kept as future work.

\vspace{-5pt}
\paragraph{Evaluating Agents' Cooperative Capabilities.} Beyond training agents to collaborate with teammates having unknown policies, researchers have developed environments and metrics to assess cooperative abilities. The Melting Pot suite~\citep{leibo2021scalable,agapiou2022melting} mostly evaluates controlled agents' ability to maximize utilitarian welfare against unknown agents. However, this evaluation suite focuses on mixed-motive games where agents may have conflicting goals. This contrasts with our work's scope of fully cooperative settings, where all agents share the same reward function. MacAlpine et al. ~\citep{LNAI17-MacAlpine} also explored alternative metrics to measure agents' cooperative capabilities while disentangling the effects of their overall skills in drop-in RoboSoccer.

\vspace{-5pt}
\paragraph{Agent Modeling.} 
Agent modeling enables agents to characterize other agents based on their actions~\citep{albrecht2018autonomous}. Such characterizations could attempt to directly infer modeled agents' actions,  goals~\citep{hanna2021interpretable}, or policies~\citep{papoudakis2020liam}. The modeled attributes have been used in cooperative, competitive, and general sum settings to inform update rules~\citep{foerster2018lola, Shen2019RobustOM} or to directly inform decision making~\citep{papoudakis2020liam}. POAM relies on agent modeling to provide important teammate information for decision-making when collaborating with unknown teammates.

\vspace{-5pt}
\paragraph{Cooperative MARL (CMARL).} 
CMARL explores algorithms for training agent teams on fully cooperative tasks. Some existing methods focus on credit assignment and decentralized control~\citep{foerster2018counterfactual, rashid18qmix}. Other works in CMARL also leverage parameter sharing and role assignment (e.g.,~\cite{christianos2021scaling, yang2022ldsa}) to decide an optimal division of labor between agents.  However, these techniques assume control over all existing agents during training and evaluation, which limits their effectiveness in settings with unseen or uncontrolled teammates, as shown in prior work~\citep{vezhnevets2020options, hu2020other, rahman_towards_2021}.

\section{Conclusion}
\label{sec:discussion}

 This paper proposes and formulates the problem of $N$-agent ad hoc teamwork (NAHT), a generalization of both AHT and MARL. 
 It further proposes a multi-agent reinforcement learning algorithm to train NAHT agents called POAM, and develops a procedure to train and evaluate NAHT agents. 
 POAM is a policy gradient approach that uses an encoder-decoder architecture to perform teammate modeling, and leverages data from uncontrolled agents for policy optimization. 
 Empirical validation on MPE and StarCraft II tasks shows that POAM consistently improves over baseline methods that naively apply existing MARL and AHT approaches, in terms of sample efficiency, asymptotic return, and generalization to out-of-distribution teammates. 
 
 \paragraph{Limitations and Future Work. }
This paper addresses a special case of the NAHT problem, with homogeneous and non-communicating agents. 
POAM, which employs full parameter sharing, may not perform well in settings with heterogeneous agents, or in settings that require highly differentiated roles. 
POAM also does not leverage centralized state information or allow communication between controlled agents. 
Incorporating this information might enable learning improved NAHT policies. 
Further, POAM's actor update is purely on-policy, and therefore cannot leverage data generated by uncontrolled agents. Future work might consider employing off-policy methods to exploit the uncontrolled agent data. 
In addition to the directions suggested by POAM's limitations, algorithmic ideas from AHT, such as diversity-based teammate generation \citep{lupu2021trajectory} and teammate-model-based planning methods \citep{barrett2017plastic}, also suggest rich avenues for future work. 
Having introduced the NAHT problem in this work, we hope the community explores the many potential directions to design even better NAHT algorithms by considering advances in MARL, AHT, and agent modeling.

\begin{ack}
This work has taken place in the Learning Agents Research
Group (LARG) at UT Austin.  LARG research is supported in part by NSF
(FAIN-2019844, NRT-2125858), ONR (N00014-18-2243), ARO
(W911NF-23-2-0004, W911NF-17-2-0181), Lockheed Martin, and UT Austin's
Good Systems grand challenge.  Peter Stone serves as the Executive
Director of Sony AI America and receives financial compensation for
this work.  The terms of this arrangement have been reviewed and
approved by the University of Texas at Austin in accordance with its
policy on objectivity in research.
\end{ack}
    
\bibliography{refs}
\bibliographystyle{plainnat}

\appendix
\onecolumn
\section{Appendix}
\label{sec:appendix}

\subsection{An Example of the Team Sampling Procedure}
\label{app:team_sampling_procedure}
Let $g(k, \mu, s)$ represent a function for randomly selecting $k$ elements from a generic set $\mu$, conditioned on a state $s \in \mS$. 
For instance, $g$ might be the distribution, \textit{Multinomial}$(k, |\mu|, \bm{\phi}(s))$, for selecting $k$ elements with replacement from the set $\mu$, parameterized by the state-dependent probability logits $\bm{\phi}(s) \in [0, 1]^{|\mu|}$. 
Allowing $g$ to depend on an initial state enriches the representable open interactions. Returning to our robot warehouse example, a state-dependent $g$ might enable us to model that  robots suitable for heavy loads are more likely to be present near the loading dock area. 
Similarly, if the time is included in the state, we would be able to model dynamic characteristics, e.g. that certain types of robots are only available during regular working hours, when humans are available to supervise.
Of course, in simple cases, $g$ might be state-independent.
For $s \in \mS$, $X$ is a sampling procedure parameterized by the tuple $(s, M, U, C, \bm{\phi}_M, g_U, g_C)$, where $\bm{\phi}_M \in [0, 1]^M$ represents the logits of a categorical distribution:
\begin{enumerate}
    \vspace{-5pt} \item Sample an integer $N$ from \textit{Cat}$(\bm{\phi}_M)$. 
    \vspace{-5pt} \item Sample $N$ controlled agents via $g_U$ from $U$. 
    \vspace{-5pt} \item Sample $M-N$ uncontrolled agents via $g_C$ from $C$.
    \vspace{-5pt}
\end{enumerate}

Note that the agent sampling functions $g_U, g_C$ could employ sampling with replacement to model scenarios where two robots may use the same policy, or sampling without replacement for scenarios where that is not possible (e.g. if all robots are heterogeneous). 

\subsection{Further Discussion of the Motivating Example}
\label{app:motivating_example}
This section provides proofs for claims in Section \ref{sec:motivation}, and further discussion in support of the motivating example.

\begin{lemma} \label{lemma:optimal-p-static}
    For a team of $M$ agents independently and identically selecting actions with probability $p$, the $p$ that maximizes the probability of winning is $p=\frac{1}{M}$. 
\end{lemma}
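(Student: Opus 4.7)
The plan is to write the winning probability $P(s=1)$ explicitly as a function of $p$ and then perform a standard one-variable calculus optimization. Since each of the $M$ agents independently selects the bit $1$ with probability $p$, the sum $s$ is $\mathrm{Binomial}(M,p)$-distributed, so the event $\{s=1\}$ corresponds to choosing exactly one agent to play $1$ while the other $M-1$ play $0$. This gives the closed form
\begin{equation*}
    P(s=1) = \binom{M}{1} p (1-p)^{M-1} = M p (1-p)^{M-1}.
\end{equation*}

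Next I would differentiate with respect to $p$ on the open interval $(0,1)$. Applying the product rule and factoring out $M(1-p)^{M-2}$ yields
\begin{equation*}
    \frac{d}{dp} P(s=1) = M(1-p)^{M-2}\bigl[(1-p) - (M-1)p\bigr] = M(1-p)^{M-2}(1 - Mp).
\end{equation*}
Inside the open interval the factor $M(1-p)^{M-2}$ is strictly positive, so the sign of the derivative matches the sign of $1 - Mp$. The derivative is therefore positive for $p < 1/M$ and negative for $p > 1/M$, identifying $p = 1/M$ as the unique interior critical point and as a maximum.

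Finally, I would check the boundary behavior to confirm that this interior critical point is the global maximizer on $[0,1]$: at $p=0$ and (for $M \geq 2$) at $p=1$, the expression $Mp(1-p)^{M-1}$ evaluates to $0$, whereas at $p=1/M$ it equals $\bigl(1-\tfrac{1}{M}\bigr)^{M-1} > 0$. This confirms the claim. I do not anticipate any real obstacles: the only subtlety is ensuring that the factor $(1-p)^{M-2}$ is handled cleanly when $M=2$ (where the exponent is $0$) and that the $M=1$ case, in which every $p$ trivially gives $P(s=1)=p$ maximized at $p=1=1/M$, is covered by the same formula.
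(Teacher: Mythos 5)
Your proof is correct and follows essentially the same route as the paper's: writing $P(s=1) = Mp(1-p)^{M-1}$ via the binomial count, differentiating, and locating the interior critical point at $p = 1/M$ while verifying the boundary values vanish. Your sign analysis of the factor $1 - Mp$ and the attention to the $M=1$ and $M=2$ edge cases are slightly more thorough than the paper's version, but the argument is the same.
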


\begin{proof}
The probability of winning, $P(s=1)$ may be computed as follows: 

\begin{equation*}
    P(s=1) = M * P(\text{agent } i \text{ chooses 1 and $i^-$ choose 0}) 
        = M p (1-p)^{M-1}.
\end{equation*}

We wish to analytically compute the global maxima of this expression with respect to $p$, by considering the boundary points $p \in \{0, 1\}$ and the zeros of the first derivative. 

Clearly, if all agents select 1 with probability 0, the team cannot win; thus, $p=0$ is not a maxima. 
Similarly, if all agents select 1 with probability 1, the team also never wins; thus, $p=1$ is also not a maxima. 
Next, we compute the derivative of $P(s=1)$: 

\begin{align*}
    \frac{d[P(s=1)]}{dp} &= M[(1-p)^{M-1} - (M-1) p (1-p)^{M-2}] \\
    &= M[(1-p)^{M-2} [(1-p) - (M-1)p]].
\end{align*}

The zeros of the above expression occur at $p=0$ and $p=\frac{1}{M}$. For $p=\frac{1}{M}$, note that $P(s=1) = (\frac{M-1}{M})^{M-1} > 0$; thus, $p=\frac{1}{M}$ must be the global maximum.  
\end{proof}

\begin{lemma} \label{lemma:aht-scenario}
In a team of three agents consisting of two uncontrolled agents who select 1 with probability $p=\frac13$, and one controlled ad hoc agent who selects 1 with probability $p_{\aht}$, the probability of winning is $P(s=1) = \frac49$.    
\end{lemma}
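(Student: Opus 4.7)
The plan is a direct enumeration: decompose the winning event $\{s=1\}$ into the three mutually exclusive sub-events determined by which single agent selects the bit $1$, then use independence of the bit choices to compute the probabilities and sum them. This gives an affine function of $p_{\aht}$ whose linear coefficient, I expect, will vanish.

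Concretely, I would label the two uncontrolled agents $u_1, u_2$ (each selecting $1$ with probability $\tfrac13$, hence $0$ with probability $\tfrac23$) and the controlled agent $c$ (selecting $1$ with probability $p_{\aht}$). The event $\{s = 1\}$ is the disjoint union of
\begin{equation*}
A_c = \{c=1,\, u_1 = 0,\, u_2 = 0\},\quad
A_1 = \{c=0,\, u_1 = 1,\, u_2 = 0\},\quad
A_2 = \{c=0,\, u_1 = 0,\, u_2 = 1\}.
\end{equation*}
By independence, $P(A_c) = p_{\aht}\cdot \tfrac{2}{3}\cdot\tfrac{2}{3} = \tfrac{4}{9}\,p_{\aht}$, while $P(A_1) = P(A_2) = (1-p_{\aht})\cdot\tfrac{1}{3}\cdot\tfrac{2}{3} = \tfrac{2}{9}(1-p_{\aht})$. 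Summing gives $P(s=1) = \tfrac{4}{9}p_{\aht} + \tfrac{4}{9}(1-p_{\aht}) = \tfrac{4}{9}$, as required.

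The main obstacle is essentially nonexistent: the argument is pure arithmetic, and the only mildly interesting observation is algebraic, namely that the coefficient of $p_{\aht}$ coming from $P(A_c)$ exactly matches the coefficient of $(1-p_{\aht})$ coming from $P(A_1) + P(A_2)$, which is why $p_{\aht}$ cancels. This cancellation is not a coincidence but a reflection of the choice $p = \tfrac13 = \tfrac{1}{M}$ for the uncontrolled agents: by Lemma \ref{lemma:optimal-p-static} this makes $(1-p)^{M-1} = Mp(1-p)^{M-1}/M$ match in the relevant way, so the controlled agent cannot improve (or worsen) the team. I would note this connection briefly after the computation, since it foreshadows Lemma~\ref{lemma:aht-in-naht-setting}, where replacing one uncontrolled agent by a second copy of the controlled policy breaks this symmetry and makes the optimal $p_{\aht}$ meaningful.
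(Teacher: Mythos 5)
Your proof is correct and follows essentially the same route as the paper's: both partition the winning event according to which agent(s) contribute the single $1$, use independence, and observe that the coefficient of $p_{\aht}$ cancels to leave $\frac49$. Your three-way split of the event is just a finer version of the paper's two-way partition (the paper groups your $A_1$ and $A_2$ into the single event $\{s_u=1,\ b_{\aht}=0\}$), and your closing remark tying the cancellation to $p=\frac1M$ is a nice, accurate observation but not a departure in method.
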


\begin{proof}
Let $s_u$ denote the sum of the bits chosen by the two uncontrolled agents, and $b_{aht}$ denote the bit value chosen by the controlled agent. 
The probability of winning can be computed by partitioning the winning outcomes by whether $s_u = 1$ and the ad hoc agent selects 0, or whether the ad hoc agent selects 1 and both uncontrolled agents select 0, $s_u = 0$. 

\begin{align*}
    P(s=1) &= P(s_u = 1 \land b_{aht} = 0) + P(s_u = 0 \land b_{aht} = 1) \\ 
    &= 2 \cdot \frac13 \cdot \frac23 \cdot 
     \bigl( 1 - p_{\aht} \bigr) + p_{\aht}\left(\frac23 \cdot \frac23\right)\\
    &= \frac49 - \frac49 \cdot p_{\aht} + \frac49 \cdot p_{\aht} \\ 
    &= \frac49.
\end{align*}
\end{proof}

\begin{lemma} \label{lemma:aht-in-naht-setting}
In a team of three agents where two ad hoc agents select 1 with probability $p_{\aht}$ and the remaining uncontrolled agent selects 1 with probability $\frac13$, the maximizing $p_{\aht} = \frac13$ and the corresponding winning probability is $P(s=1) = \frac49$.
\end{lemma}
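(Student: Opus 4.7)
The plan is to write $P(s=1)$ explicitly as a function of $p_{\aht}$ by enumerating the winning outcomes, then to optimize the resulting one-variable polynomial. Let $b_1, b_2 \in \{0,1\}$ be the bits chosen by the two controlled agents (each equal to $1$ independently with probability $p_{\aht}$) and let $b_u \in \{0,1\}$ be the bit chosen by the uncontrolled agent (equal to $1$ with probability $\frac{1}{3}$). Since the team wins exactly when $b_1 + b_2 + b_u = 1$, I would partition the winning event into two disjoint cases: either (a) $b_u = 1$ and both controlled agents play $0$, or (b) $b_u = 0$ and exactly one controlled agent plays $1$.

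Next, I would compute the probability of each case using independence. Case (a) contributes $\tfrac{1}{3}(1-p_{\aht})^2$, and case (b) contributes $\tfrac{2}{3}\cdot 2 p_{\aht}(1-p_{\aht})$. Summing gives
\begin{equation*}
P(s=1) \;=\; \tfrac{1}{3}(1-p_{\aht})^{2} + \tfrac{4}{3}\,p_{\aht}(1-p_{\aht}),
\end{equation*}
which expands to a quadratic in $p_{\aht}$. I would then follow the same recipe as in the proof of Lemma \ref{lemma:optimal-p-static}: check the boundary values $p_{\aht} \in \{0, 1\}$ (which give $\tfrac{1}{3}$ and $0$ respectively, neither of which can be the maximum interior value), differentiate with respect to $p_{\aht}$, set the derivative to zero, and solve. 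The first-order condition yields a single interior critical point, which I expect to be $p_{\aht} = \tfrac{1}{3}$; since the quadratic has negative leading coefficient, this critical point is the global maximum. Finally, I would substitute $p_{\aht} = \tfrac{1}{3}$ back into the expression for $P(s=1)$ to obtain the value $\tfrac{4}{9}$.

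There is no real obstacle here: the argument is a direct, one-variable optimization, and the only place a mistake is likely to creep in is in the case enumeration. I would therefore be careful to verify that the two cases are genuinely disjoint and exhaustive for the event $\{b_1 + b_2 + b_u = 1\}$, and to include the combinatorial factor of $2$ in case (b) accounting for which of the two controlled agents plays $1$. Once that bookkeeping is correct, the remaining calculus is routine.
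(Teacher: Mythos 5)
Your proposal is correct and follows essentially the same route as the paper's proof: the identical two-case decomposition by the uncontrolled agent's bit, yielding $P(s=1) = \tfrac{1}{3}(1-p_{\aht})^2 + \tfrac{4}{3}p_{\aht}(1-p_{\aht})$, followed by checking the boundary values and setting the derivative to zero to find $p_{\aht} = \tfrac{1}{3}$ and $P(s=1) = \tfrac{4}{9}$. The only cosmetic difference is that you additionally invoke the negative leading coefficient of the quadratic to certify a global maximum, which is a fine (if redundant) extra check.
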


\begin{proof}
Let $s_{aht}$ denote the sum of the bits chosen by the two ad hoc agents, and $b_{u}$ denote the bit value chosen by the uncontrolled agent. 
The probability of winning may be directly computed as follows: 
\begin{align*}
    P(s=1) &= P(s_{aht} = 1 \land a_u = 0) + P(s_{aht} = 0 \land a_u = 1) \\
    &= 2 \cdot p_{\aht} \cdot (1 - p_{\aht}) \cdot \frac23 + (1 - p_{\aht})^2 \cdot \frac13 \\
    &= (1 - p_{\aht}) \left[ \frac43 p_{\aht} + \frac13 (1 - p_{\aht}) \right] \\ 
    &= (1 - p_{\aht}) \left(\frac13 + p_{\aht}\right).
\end{align*}
To determine the maximizing value, we compute $P(s=1)$ at the boundary points $p_{\aht} \in \{0, 1\}$ and at the zeros of the derivative of $P(s=1)$. 
Note that for $p_{\aht}=0$, $P(s=1) = \frac13$, while for $p_{\aht} = 1$, $P(s=1) = 0$. 

The derivative of the analytic expression of $P(s=1)$ with respect to $p_{\aht}$ is: 

\begin{equation}
    \frac{d [ P(s=1)]}{d p_{\aht}} = (1 - p_{\aht}) - \left(\frac13 + p_{\aht}\right) = \frac23 - 2p_{\aht}. 
\end{equation}

The zeros of the above expression occur at $p_{\aht} = \frac13$, which corresponds to $P(s=1) = \frac49$.
\end{proof}

\begin{lemma} \label{lemma:two-player-setting}
    In a team of three agents where one agent always selects 0, one agent selects 1 with probability $\frac13$, and the last agent selects 1 with probability $p_{\naht}$, the optimal $p_{\naht} = 1$ and results in a winning probability of $P(s=1) = \frac23$.
\end{lemma}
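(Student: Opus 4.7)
The plan is to observe that fixing one controlled agent to always play $0$ collapses the three-agent sum to a two-agent sum, after which the optimization in $p_{\naht}$ becomes a one-variable linear problem. First I would write $s = b_A + b_B + b_C$, where $b_A$ is the agent that always selects $0$, $b_B$ is the controlled agent selecting $1$ with probability $p_{\naht}$, and $b_C$ is the uncontrolled agent selecting $1$ with probability $\frac{1}{3}$. Since $b_A = 0$ deterministically, the event $\{s = 1\}$ reduces to $\{b_B + b_C = 1\}$.

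Next I would expand $P(b_B + b_C = 1)$ by partitioning on which of $B$ or $C$ selects $1$, using the independence of their draws. This yields an expression of the form $p_{\naht}\cdot(1 - \tfrac{1}{3}) + (1 - p_{\naht})\cdot \tfrac{1}{3}$, which simplifies to a linear function of $p_{\naht}$, specifically $\tfrac{1}{3}(1 + p_{\naht})$.

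Finally, since this expression is strictly increasing in $p_{\naht}$ on $[0,1]$, the unique maximizer is the boundary point $p_{\naht} = 1$, at which $P(s=1) = \tfrac{2}{3}$. No calculus is needed, unlike in Lemma \ref{lemma:aht-in-naht-setting}, because the dependence on $p_{\naht}$ is affine rather than quadratic---this is precisely the payoff of having one controlled agent commit to $0$, since it removes the quadratic self-interference term $p_{\naht}^{2}$ that appeared when both controlled agents used the same mixed strategy. There is no substantial obstacle in the proof; the only subtlety is to correctly identify which agent's probability remains fixed at $\tfrac{1}{3}$ (the uncontrolled teammate, per the setup of Section \ref{sec:motivation}), and then to confirm that the resulting linear objective is monotone, so that the optimum lies at a boundary rather than in the interior.
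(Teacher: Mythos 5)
Your proposal is correct and follows essentially the same route as the paper's proof: reduce to a two-agent game since one agent deterministically plays $0$, partition the winning event by which of the remaining two agents plays $1$, obtain the affine expression $\frac13 + \frac13 p_{\naht}$, and conclude by monotonicity that $p_{\naht}=1$ is optimal with $P(s=1)=\frac23$. Your added observation that the linearity (versus the quadratic in Lemma \ref{lemma:aht-in-naht-setting}) is what makes the boundary optimum possible is a nice touch but does not change the argument.
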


\begin{proof}
Since one of the two controlled agents always plays 0, the game effectively becomes a two-player game instead. 

Let $a_u$ denote the action selected by the uncontrolled agent, and let $a_{naht}$ denote the action of the controlled agent.
The probability of winning may be directly computed as follows: 
\begin{align*}
    P(s=1) &= P(a_u = 1 \land a_{naht} = 0) + P(a_u = 0 \land a_{naht} = 1) \\
    &= \frac13 \cdot (1 - p_{\naht}) + \frac23 \cdot p_{\naht} \\ 
    &= \frac 13 + \frac13 \cdot p_{\naht}.
\end{align*}

It is clear that $p_{\naht} = 1$ maximizes $P(s=1)$, and the corresponding value of $P(s=1) = \frac23$.
\end{proof}

\subsection{Experiment Details}
\label{app:exp_details}

\subsubsection{Team Sampling Procedure Used in Experiments}
\label{app:exp_team_sampling_proc}

\begin{wrapfigure}{r}{0.4\textwidth} %
    \centering
    \includegraphics[width=0.4\textwidth]{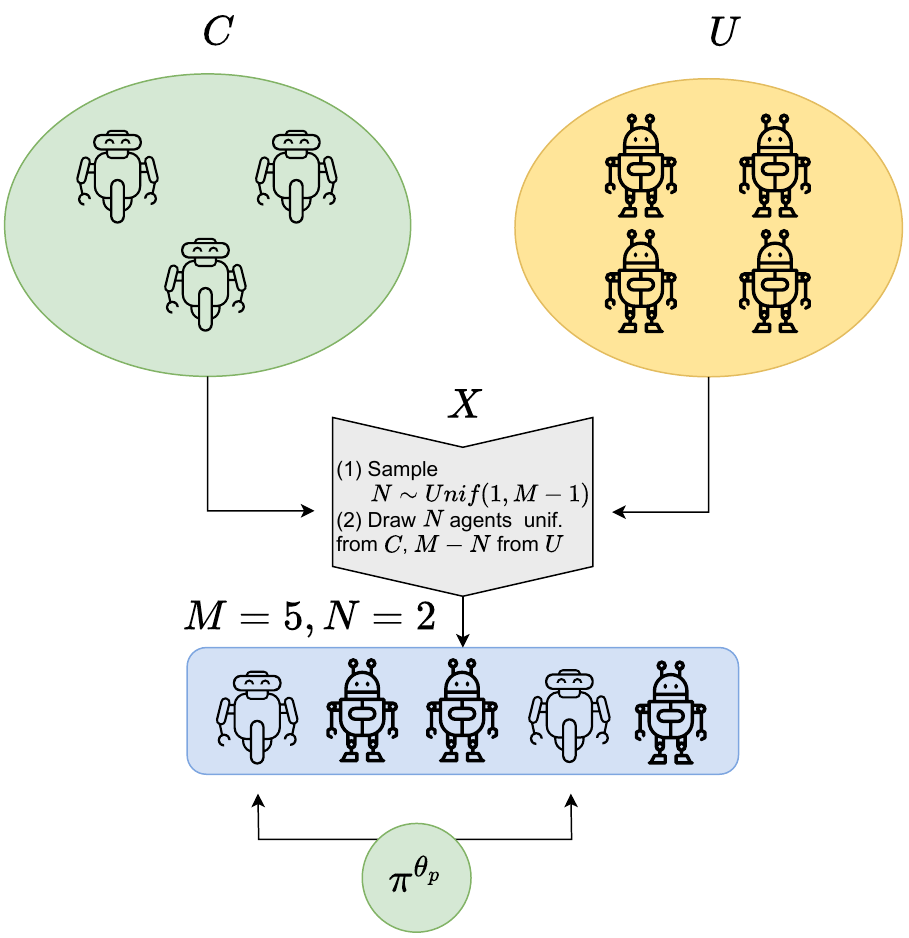}
    \caption{A practical instantiation of the NAHT problem.}
    \label{fig:naht_train}
\end{wrapfigure}

Recall that $U$ and $C$ denote the sets of uncontrolled and controlled agent policies, respectively, while $M$ denotes the team size for the task, and $N$ the number of agents sampled from $C$. 
The experiments in the following consider an $X_{train}$ consisting of sampling $N$ uniformly from $\{1, \cdots, M-1\}$, sampling $N$ agents from $C$ and $M-N$ agents from $U$ in a uniform fashion (Figure~\ref{fig:naht_train}). 
The sampling procedure takes place at the beginning of each episode to select a team, which is then deployed in the environment. Data generated by the deployed team (e.g. joint observations, joint actions, and rewards) are returned to the learning algorithm. See Appendix~\ref{app:eval_details} for further details on the evaluation procedure.

\subsubsection{Experimental Domains}
\label{app:exp_setting}

\paragraph{MPE Predator Prey}
The MPE environment \citep{lowe2017multi} (released under the MIT License) is a setting where particle agents interact within a bounded 2D plane, equipped with a discrete action space and continuous observation space. The observation space is 16-dimensional, and contains the agents' own position and velocity, relative positions/velocities of all other agents, landmarks, and prey. The action space consists of five discrete actions, corresponding to the four cardinal movement directions and a no-op action. The observation range is normalized to $[-1,1]$, while the discrete action space is one-hot encoded.

The predator-prey task (\texttt{mpe-pp}) is a custom task implemented by the authors of this paper within the fork of the MPE environment released by \citet{papoudakis_benchmarking_2021} (MIT License).
In this task, three predators must cooperatively pursue a pre-trained prey agent. 
We use the pre-trained prey policy provided by the ePymarl MARL framework. 
The prey policy was originally trained by \citet{papoudakis_benchmarking_2021}, by using the MADDPG MARL algorithm to train both predator and prey agents for 25M steps, and generally attempts to escape approaching predators.
The team receives a reward of 1 if two or more predators collide with the prey at a single time step, and no reward if only a single agent collides with the prey. 
A shaping reward consisting of 0.01 times the $\ell_2$ distance between each agents in the team and the prey is provided as well. 
Since the prey policy is pre-trained and fixed, note that our predator-prey task is a fully cooperative task from the perspective of the predator (learning) agents.
The maximum episode length is 100 time steps.

\paragraph{SMAC}
SMAC \citep{samvelyan19smac} (released under the MIT License) features a set of cooperative tasks, where a team of allied agents must defeat a team of enemy agents controlled by the game server. 
It is a partially observable domain with a continuous state space and discrete action space. 
For each agent, the observation space is continuous, and consists of features about itself, enemy, and allied agents within some radius. 
The action space is discrete, and allows an agent to choose an enemy to attack, a direction to move in, or to not perform any action.
As the number of allies and enemies varies between tasks, the dimensionality of the observation and action spaces is particular to each task. 
The observation space is normalized to be between $[-1, 1]$, while the action space is one-hot encoded.
The maximum number of time steps per episode is also task specific, although early termination occurs if all enemies are defeated.
At each time step, the team receives a shaped reward corresponding to the damage dealt, and bonuses of 10 and 200 points for killing an enemy and winning the scenario by defeating all enemies. 
The reward is scaled such that the maximal return achievable in each task is 20.

The SMAC tasks considered in this paper are described in more detail below: 
\begin{itemize}
    \item \texttt{5v6}: stands for 5m vs 6m, five allied Marines versus six enemy Marines.
    \item \texttt{8v9}: stands for 8m vs 9m, eight allied Marines versus nine enemy Marines.
    \item \texttt{10v11}: stands for 10m vs 11m, ten allied Marines versus eleven enemy Marines. 
    \item \texttt{3s5z}: stands for 3s vs 5z, three allied Stalkers versus five enemy Zealots. 
\end{itemize}

\subsubsection{Evaluation Details}
\label{app:eval_details}

This section provides details on how mixed teams are evaluated in the NAHT setting, and how cross-play scores, and self-play scores, and uncertainty measures are computed. 

\paragraph{$M-N$ score.} 
Given a set of controlled agents $C$, and a set of uncontrolled agents $U$, the goal of the $M-N$ score is to quantify the performance when these two teams must cooperate within the NAHT scenario. 
The $M-N$ score is computed in a deterministic and exhaustive fashion, by iterating over all possible values of $N$. 
Let $N$ be the number of agents sampled from set $C$, such that $N < M$. For $N \in \{1, \cdots, M-1\}$, construct the joint policy $\bm{\pi}^{(M)}$ by selecting $N$ agents uniformly from $C$ and $M-N$ agents from $U$. Evaluate the resultant team on the task for $E$ episodes. This results in $(M-1)*E$ episode returns, which is averaged to form the $M-N$ score. 
\paragraph{Cross-play scores.} The cross-play scores reported in this paper are the average returns of teams generated by algorithm $A$, when coordinating with those generated by $B$. To compute the cross-play score, we first train multiple teams (by varying the seed) via both algorithms $A$ and $B$. Next, the $M-N$ score is computed for random pairings of teams (seeds) from $A$ and $B$, following the procedure specified above. Summary statistics can be computed over the set of all such NAHT returns. 

For example: each algorithm (VDN, QMIX, IQL, MAPPO, IPPO) is run $k$ times with different seeds, to generate $k$ teams of agents that may act as uncontrolled teammates. For each pair of algorithms, we sample a subset of the possible seed \textit{pairs}, and evaluate the teams that result from merging said seed pairs.
If VDN and QMIX have seeds 1 2, and 3, the cross-play evaluation might consider the seed pairings (VDN 1, QMIX 2), (VDN 2, QMIX 3), (VDN 3, QMIX 1). Given a pair of seeds (e.g. (VDN 1), (QMIX 2)), the $M-N$ cross-play score is computed as the average return generated by sweeping $N \in \{1, \cdots, M-1\}$ and evaluating the merged team that consists of selecting $N$ agents from the first team, and $M - N$ agents for $E$ episodes. In our experiments, $E = 128$, and $k=5$. 

\paragraph{Self-play scores.} The self-play scores reported in this paper are \textit{model} self-play score, rather than \textit{algorithm} self-play score \citep{marl-book-albrecht24}. The reason for this is that we are interested in the performance of agents when paired with \textit{known teammates}. 

\paragraph{Measuring uncertainty} Means and 95\% confidence intervals are computed over all team/seed pairings considered for any given scores, as we treat each $M-N$ evaluation as an independent trial. For most experiments, five seed pairs are considered, where seeds are paired to ensure that all seeds participate in at least one evaluation. This ensures  that the computational cost of the evaluation remains linear in $N$, rather than quadratic. However, note that for the OOD experiments presented in Section \ref{sec:ood_generalization}, we consider all possible seed pairings for the most comprehensive evaluation.

\subsubsection{Algorithm Implementation}
\label{app:alg}

The experiments in this paper use algorithm implementations from the ePyMARL codebase \citep{papoudakis_benchmarking_2021} (released under the Apache License). The value-based methods are used without modification (i.e. IQL, VDN, QMIX), but we implement our version of policy gradient methods (IPPO, MAPPO), based on the implementation of \citet{yu2021surprising}. 

\paragraph{Parameter Sharing.} All methods employ recurrent actors and critics, with full parameter sharing, i.e. all agents are controlled by the same policy, where the agent id is input to the actor and critic networks, to allow behavioral differentiation between agents. 
For POAM, which also maintains encoder-decoder networks for agent modeling, parameter sharing is used for the encoder and decoder networks, to improve training sample efficiency. Further, POAM's decoder, which predicts observations and actions for all $-i$ agents, employs parameter sharing across agent predictions, to prevent the target dimensionality from scaling with the number of teammates.

\paragraph{Optimizer and Neural Architecture.} The Adam optimizer is applied for all networks involved.  
For policy gradient methods, the policy architecture is two
fully connected layers, followed by an RNN (GRU) layer, followed by an output layer.
Each layer has 64 neurons with ReLU activation units, and employs layer normalization.
The critic architecture is the same as the policy architecture.
The value-based methods employ the same architecture, except that there is only a single fully connected layer before the RNN layers, and layer normalization is not used, following the ePyMARL implementation. 
Please consult the codebase for full implementation details.

\begin{table}[h]
    \centering
    \begin{tabular}{|c|c|c|c|p{1.5cm}|c|p{1.5cm}|} \hline 
         Algorithm&  Buffer size&  Epochs&  Minibatches&  Entropy&  Clip&  Clip value loss\\ \hline 
         IPPO&  128, \textbf{256}, 512&  1, \textbf{4}, 10&  1, \textbf{3}&  0.01, 0.03, \textbf{0.05}&  0.01, 0.05, \textbf{0.1}&  \textbf{no}, yes\\ \hline 
         MAPPO&  64, 128, \textbf{256}&  4, \textbf{10}&  \textbf{1}, 3&  0.01, \textbf{0.03}, 0.05, 0.07&  \textbf{0.05}, 0.1, 0.2&  no, \textbf{yes}\\ \hline
    \end{tabular}
    \vspace{2pt}
    \caption{Hyperparameters evaluated for the policy gradient algorithms. Selected values are bolded}
    \label{tab:hyperparam_pg}
\end{table}

\paragraph{Hyperparameters.} For the value-based methods, default hyperparameters are used. We tune the hyperparameters of the policy gradient methods on the \texttt{5v6} task, and apply those parameters directly to the remaining SMAC tasks. The hyperparameters considered for policy gradient algorithms are given in Table \ref{tab:hyperparam_pg}. POAM adopts the same hyperparameters as IPPO where applicable. We also tuned additional hyperparameters specific to POAM (Table~\ref{tab:poam_hyper}).

\begin{table}[h]
    \centering
    \begin{tabular}{|c|c|c|c|l|} \hline 
         Task & Algorithm&  ED epochs& ED Minibatches &ED LR\\ \hline 
         \texttt{5v6} & POAM&  \textbf{1}, 5, 10&  \textbf{1}, 2&\textbf{0.0005}, 0.005\\ \hline
         \texttt{mpe-pp} & POAM&  \textbf{1}, 5&  \textbf{1}, 2&\textbf{0.0005}, 0.001\\ 
         \hline
    \end{tabular}
    \vspace{10pt}
    \caption{Additional hyperparameters evaluated for POAM; note that ED stands for encoder-decoder. Selected values are bolded.}
    \label{tab:poam_hyper}
\end{table}

\subsection{Supplemental Figures}
\label{app:supp_results}

This section contains additional figures referenced by the main paper. Please see Section \ref{sec:exp} for the corresponding analysis and discussion.

\begin{figure}[h!]
\centering
\begin{minipage}{.55\textwidth}
  \centering
    \includegraphics[width=0.85\textwidth]{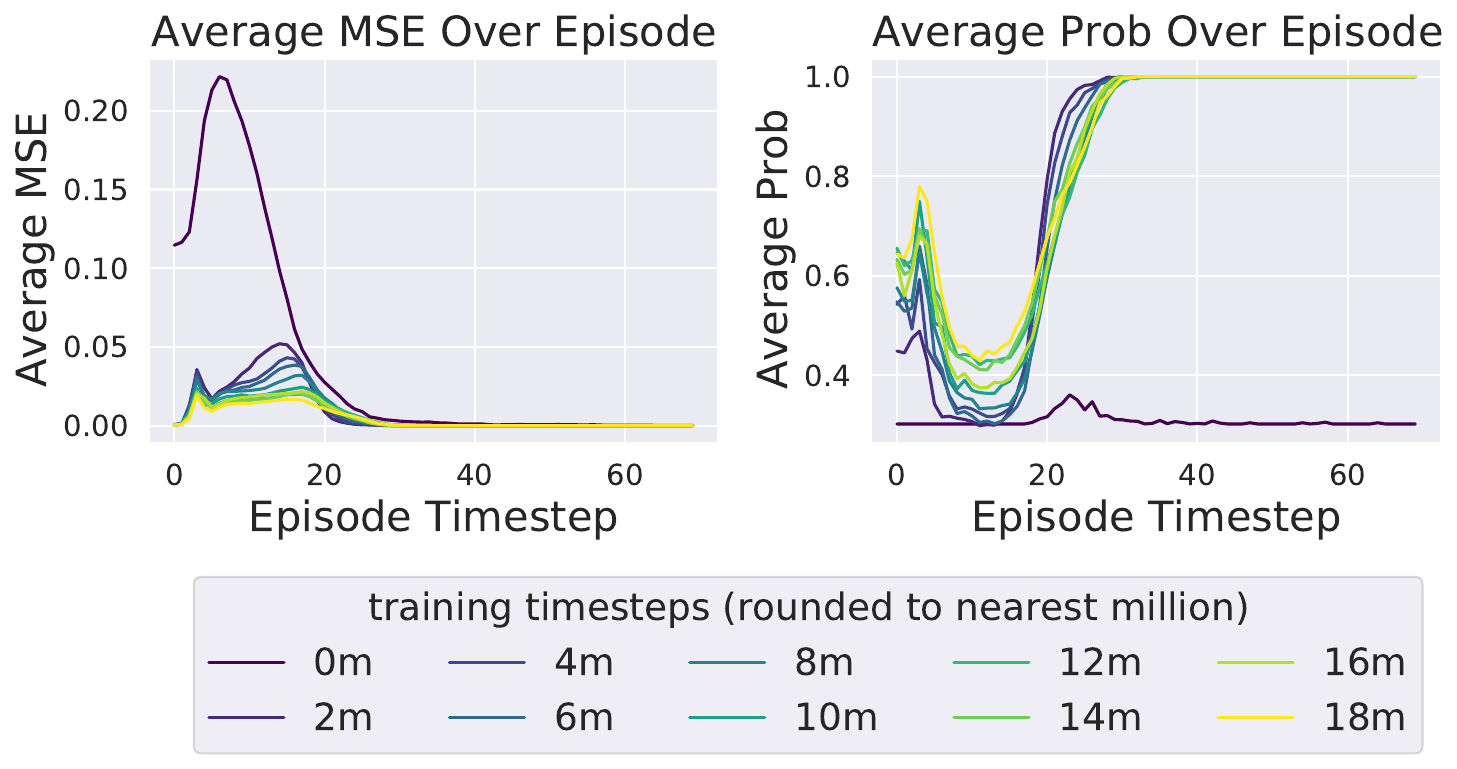}
    \caption{Evolution of the within-episode mean squared error (left) and probability of actions that were actually taken by modeled teammates (right), computed by the POAM agent over the course of training, on the \texttt{5v6} task. 
    The agent modeling performance improves over the course of an episode, as more data about teammate behavior is observed.
    }
    \label{fig:5v6_poam_within_episode}
\end{minipage}%
\hspace{5pt}
\begin{minipage}{.35\textwidth}
    \centering
    \vspace{-5pt}
    \includegraphics[width=1.0\textwidth]{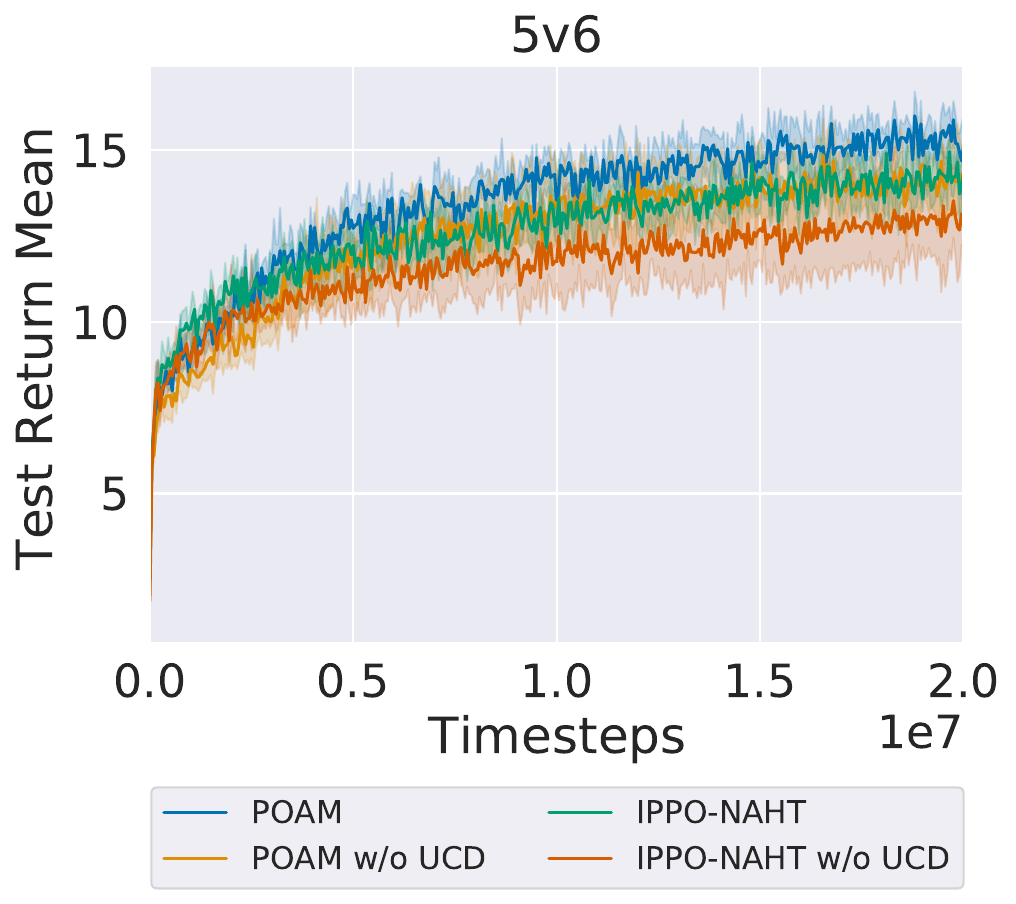}
    \caption{Learning curves of POAM and IPPO, where the value network is trained with and without uncontrolled agents' data (UCD) on \texttt{5v6}. }
    \label{fig:5v6_critic_masking}
\end{minipage}
\end{figure}

\begin{figure}[h!]
    \centering
    \includegraphics[width=1.0\textwidth]{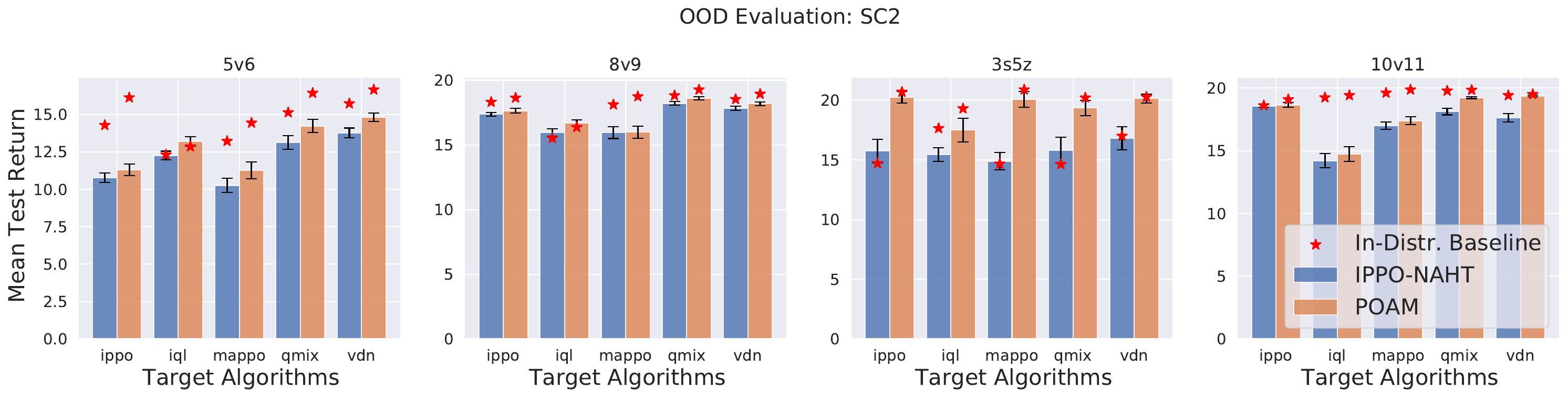}
    \caption{Returns achieved by POAM and IPPO-NAHT, when paired with out-of-distribution teammates. POAM shows improved generalization to OOD teammates, as compared to IPPO-NAHT, across all StarCraft tasks. 
    For each type of teammate, the performance of IPPO-NAHT/POAM against the exact teammate seen during training is shown as the in-distribution baseline.}
    \label{fig:sc2_ood_gen}
\end{figure}

\subsection{Supplemental Analysis}

The following subsections contains secondary analysis and results, intended to support the primary analysis of the main paper. 
For all results, the mean and 95\% confidence interval over five trials is reported. All reported returns are \textit{test} returns.

\subsubsection{The Need for Dedicated NAHT Algorithms - Empirical Evidence}
\label{app:supp_results:need_for_naht_exp}

Section \ref{sec:motivation} argues theoretically that with a population of uncontrolled teammates that select one with probability 1/3 and zero otherwise, an optimal policy learned in the AHT setting would not be optimal in the NAHT setting. Here, we present experiments on the three agent bit matrix game that empirically verify the theory. Let $N$ denote the number of controlled agents. The episode length is 25, and the observation for each agent consists of the agent index and the joint action at the previous time step. The team reward at each time step is $3*\mathbbm{1}_{\sum_i b_i = 1}$.

The optimal expected return in the $N=1$ (AHT) setting is 33.333 (derived from Lemma A.3), and in the $N=2$ (NAHT) setting is  50.0 (derived from Lemma A.4).  The methods compared are POAM-AHT and POAM. Table \ref{tab:need_for_naht} shows the expected returns achieved by POAM and POAM-AHT on the $N=1$ and $N=2$ scenarios. Since any policy is optimal in the $N=1$ case, as expected, both methods achieve near-optimal returns. In the $N=2$ case, as expected, POAM outperforms POAM-AHT by a large margin, achieving a near-optimal return of $48.858 \pm 1.092$.
 
\begin{table}[hb]
    \centering
    \begin{tabular}{|c|c|c|} \hline 
         &  N=1 (AHT)& N=2 (NAHT)\\ \hline 
         POAM-AHT&  33.441 $\pm$ 0.511& 22.5 $\pm$ 8.250\\ \hline 
         POAM&  33.483 $\pm$ 0.485& 48.858 $\pm$ 1.092\\ \hline
    \end{tabular}
    \vspace{10pt}
    \caption{Returns of POAM-AHT versus POAM on the three agent bit matrix game, in the N=1 (AHT) and N=2 (NAHT) setting. POAM and POAM-AHT both achieve the optimal return for the $N=1$ case, but POAM has a much higher return on the $N=2$ case.}
    \label{tab:need_for_naht}
\end{table}

\subsubsection{Validating the Existence of Coordination Conventions}
\label{app:supp:coord_conventions}

The experimental procedure detailed in Section~\ref{sec:ExpDesign} generates diverse teammates in SMAC and MPE by using  MARL algorithms to train multiple teams of agents. Two underlying assumptions of the procedure are that (1) teams trained by the same algorithm learn non-compatible coordination conventions, and (2) teams trained by different algorithms are not compatible. Both points are experimentally verified in this section.

\paragraph{Self-play with MARL algorithms.}
For the tasks under consideration, teams trained using the same MARL algorithm, but different seeds, can converge to distinct coordination conventions. Figure \ref{fig:sp_scores} demonstrates this by depicting the return of teams trained together (matched seeds) versus those of teams that were not trained together (mismatched seeds), across all naive MARL algorithms (IPPO, IQL, MAPPO, QMIX, VDN) and tasks considered. 
Overall, the returns of the teams that were trained together are higher than those not trained together.

The general phenomenon has been previously observed and exploited by prior works in ad hoc teamwork \citep{strouse_fcp_2022}. This paper takes advantage of this to generate a set of diverse teammates for the StarCraft II experiments. We select tasks where the effect is significant, to ensure that there are distinct coordination behaviors for POAM to model. Tasks that were considered but subsequently ruled out for this reason include \texttt{3m vs 3m, 8m vs 8m}, \texttt{6h vs 8z}, and the MPE Spread task.

\begin{figure}[t]
    \centering
    \includegraphics[width=1.0\textwidth]{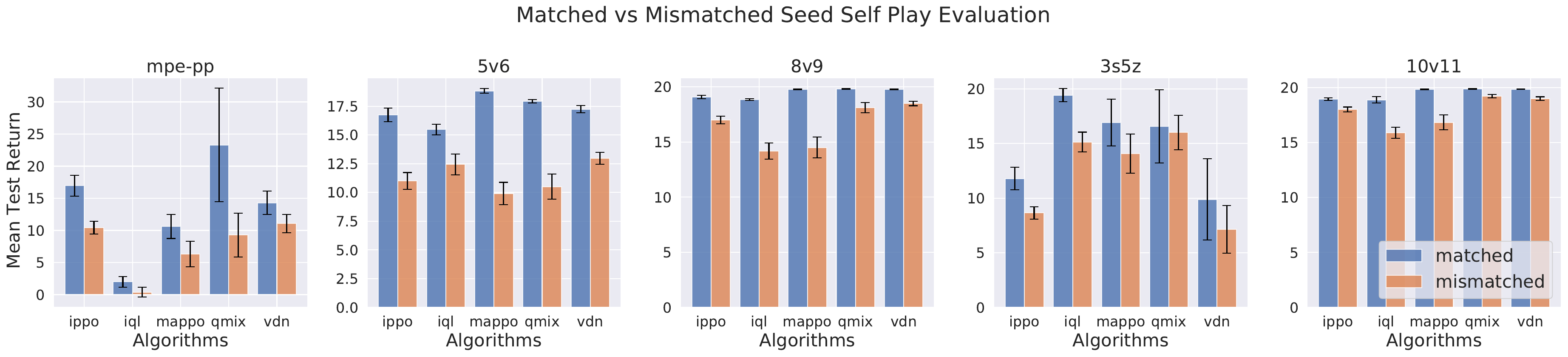}
    \caption{Agent teams that were trained together using the same algorithm (\texttt{matched} seeds) have higher returns than teams that were not trained together but were trained with the same algorithm (\texttt{mismatched} seeds).}
    \label{fig:sp_scores}
\end{figure}

\paragraph{Cross-play with MARL algorithms}

\begin{table}[h!]
    \centering
    \includegraphics[width=\textwidth]{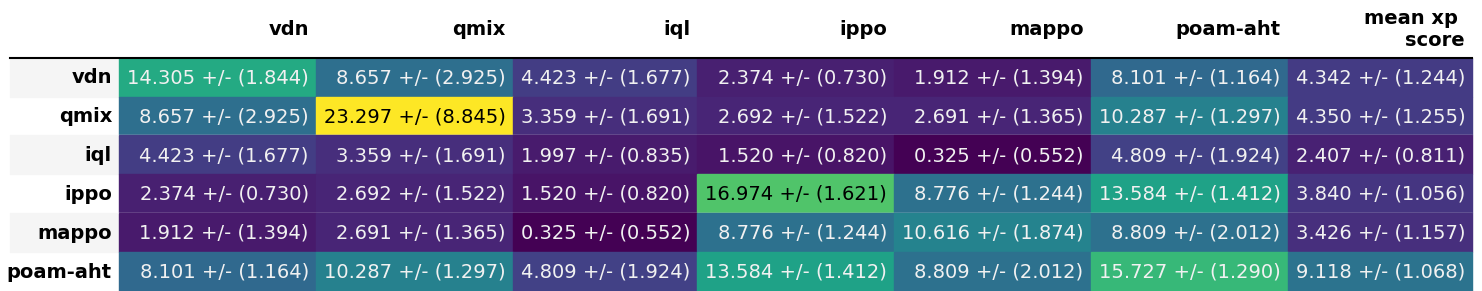}
    \caption{Cross-play results for the \texttt{mpe-pp} task.}
    \label{fig:mpe-pp-xp-matrix}
\end{table}

\begin{table}[h!]
    \centering
    \includegraphics[width=\textwidth]{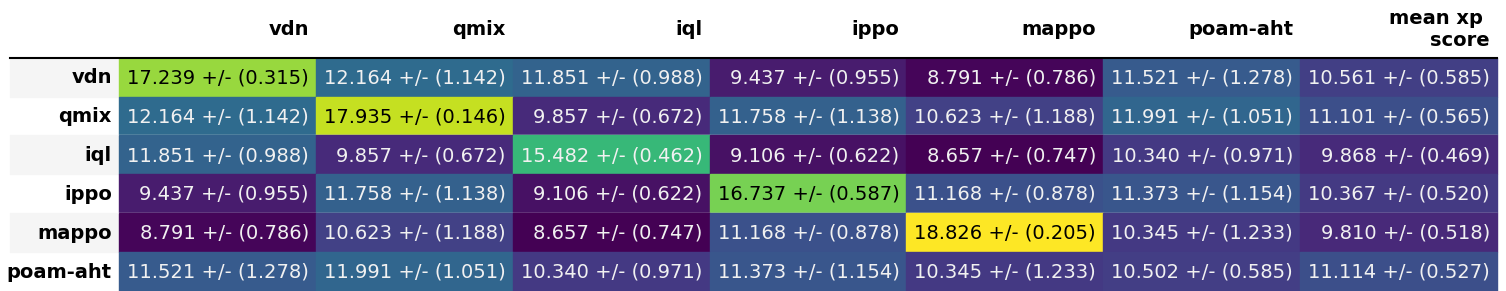}
    \caption{Cross-play results for the \texttt{5v6} task.}
    \label{fig:5v6-xp-matrix}
\end{table}

\begin{table}[h!]
    \centering
    \includegraphics[width=\textwidth]{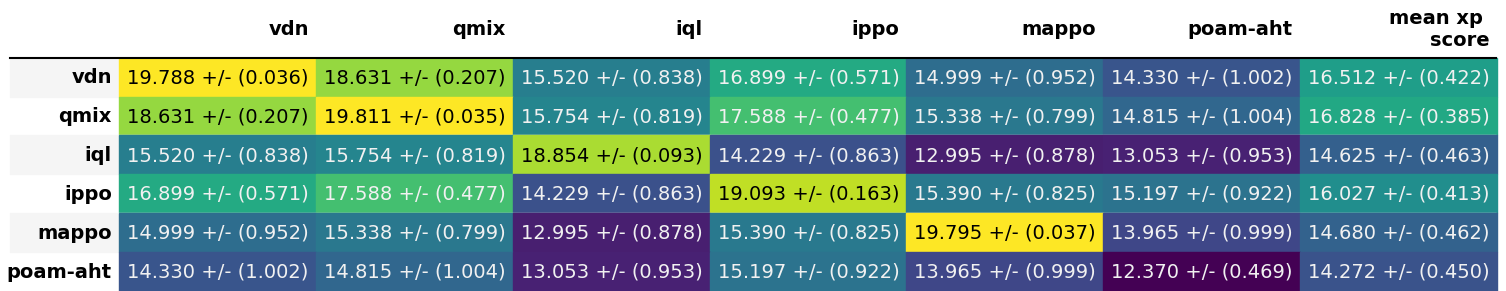}
    \caption{Cross-play results for the \texttt{8m} task.}
    \label{fig:8v9-xp-matrix}
\end{table}

\begin{table}[h!]
    \centering
    \includegraphics[width=\textwidth]{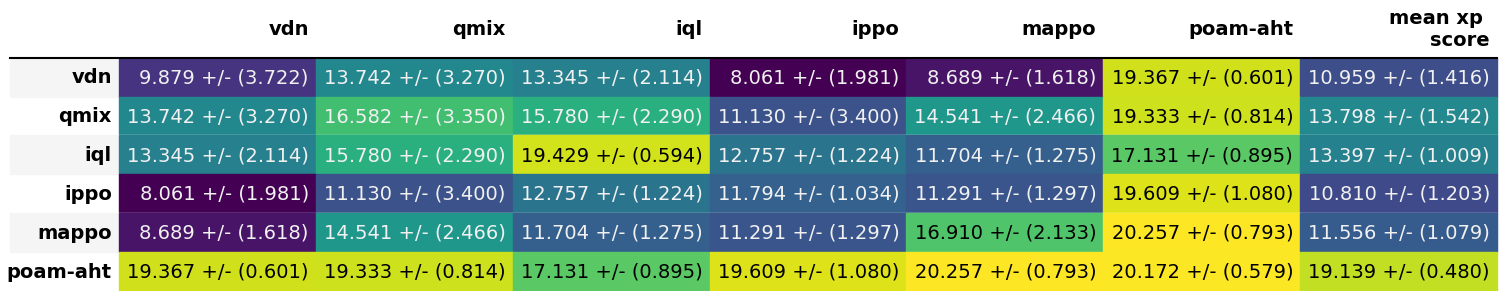}
    \caption{Cross-play results for the \texttt{3s5z} task.}
    \label{fig:3s5z-xp-matrix}
\end{table}

\begin{table}[h!]
    \centering
    \includegraphics[width=\textwidth]{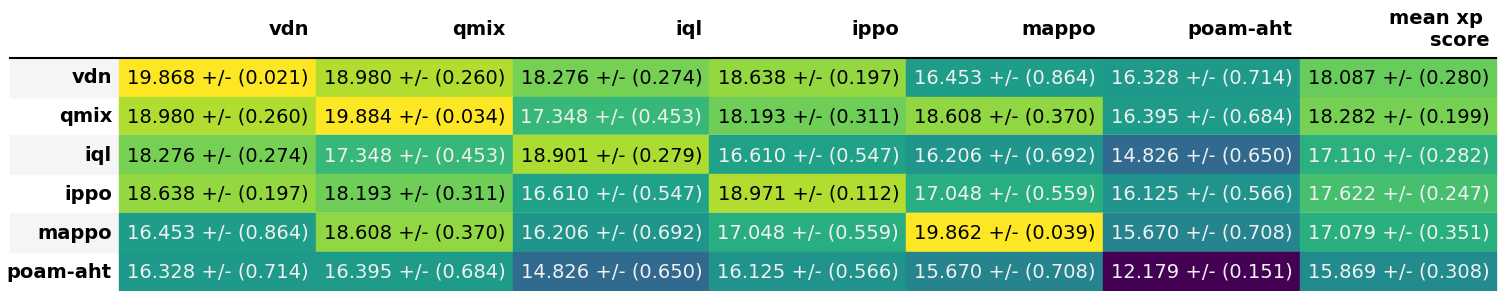}
    \caption{Cross-play results for the \texttt{10v11} task.}
    \label{fig:10v11-xp-matrix}
\end{table}

Tables \ref{fig:mpe-pp-xp-matrix}, \ref{fig:5v6-xp-matrix}, \ref{fig:8v9-xp-matrix}, \ref{fig:3s5z-xp-matrix}, and \ref{fig:10v11-xp-matrix} display the full cross-play results for all MARL algorithms and POAM-AHT, on all tasks. Note that the tables are reflected across the diagonal axis for viewing ease. The values on the off-diagonal (i.e., where the two algorithms are not the same) are the cross-play score, while the values shown on the diagonal are self-play scores, computed as described in App. \ref{app:eval_details}. 
The cross-play and self-play scores displayed are means and  95\% confidence intervals. 
The rightmost column reflects the row average and 95\% confidence interval of the cross-play (XP) scores corresponding to the test set of VDN, QMIX, IQL, IPPO, and MAPPO, excluding the self-play score. 
Thus, the rightmost column reflects how well on average the row MARL/AHT algorithm can generalize to the test set used throughout this paper.   

Overall, we find that MARL algorithms perform significantly better in self-play than cross-play. We note that there are a few exceptions (e.g., IPPO vs QMIX on \texttt{8m}), and that the cross-play and self-play scores are much closer on \texttt{10v11}, but overall the trend is consistent across tasks. 

\subsubsection{Generalization to Unseen Teammate Types}
\label{app:supp:alt_ood_gen}

As discussed and experimentally verified in Section \ref{app:supp:coord_conventions}, diverse coordination conventions in StarCraft and MPE Predator Prey can be generated by (1) running the same MARL algorithm with various random seeds, or (2) running various MARL algorithms. 
In Section \ref{sec:ood_generalization}, the out-of-distribution (OOD) teammates considered were generated by running MARL algorithms with different random seeds than those used to generate train-time teammates---in other words, generating diverse and unseen teammates using the first 
procedure specified above.

\begin{figure}[h!]
    \centering
    \includegraphics[width=1.0\textwidth]{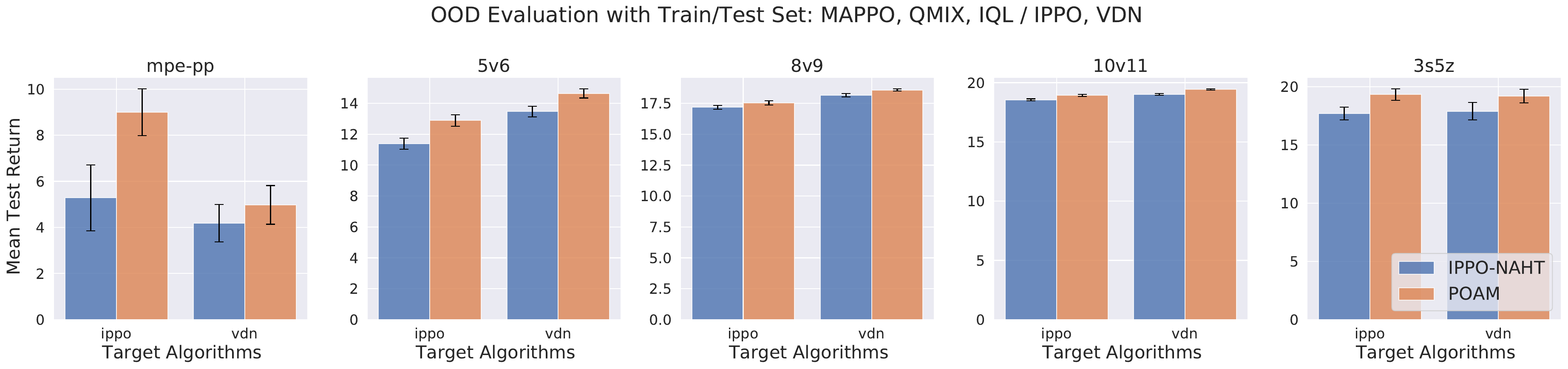}
    \caption{Returns achieved by POAM and IPPO-NAHT, when trained on MAPPO, QMIX, and IQL, and tested on IPPO, VDN.}
    \label{fig:alt_ood_gen}
\end{figure}

Here, we generate OOD teammates using the second procedure. More precisely, the five MARL algorithms used to generate teammates may be divided into train/test sets. Figure \ref{fig:alt_ood_gen} shows the results of such an experiment, where POAM and IPPO-NAHT are trained with MAPPO, QMIX, and IQL teammates, and tested with agents from IPPO and VDN. Similar to the results presented in Section \ref{sec:ood_generalization}, we find that POAM generally outperforms IPPO-NAHT for the unseen teammate types for all tasks.

\subsubsection{Modeling Controlled and Uncontrolled Agents}
\label{app:supp:ed_ctrl_unctrl}

\begin{figure}[h!]
    \centering
    \includegraphics[width=0.60\textwidth]{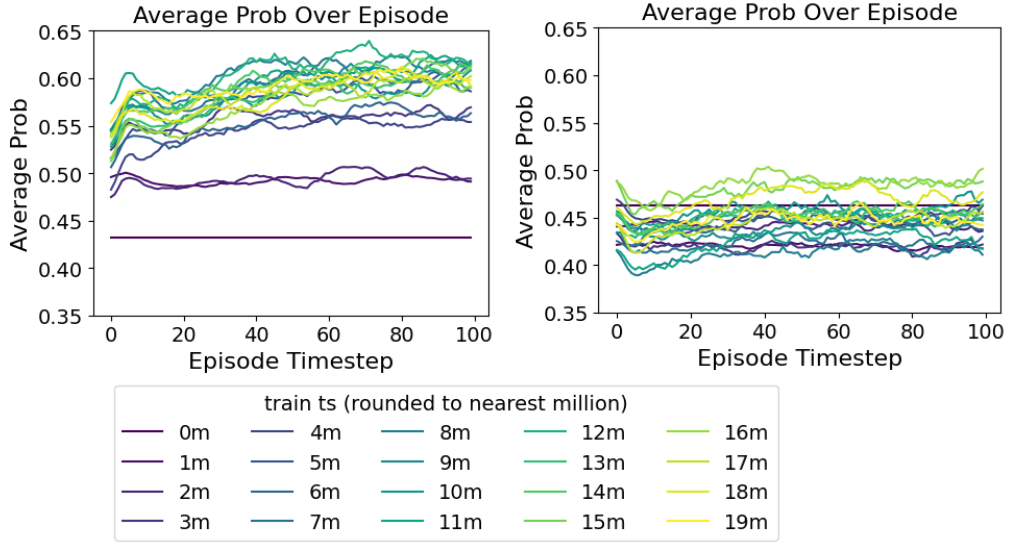}
    \caption{Loss of POAM's encoder-decoder on the \texttt{mpe-pp} task, separated by uncontrolled (left) and controlled  agents (right).}
    \label{fig:ed_loss_ctrl_unctrl}
\end{figure}

POAM’s encoder-decoder (ED) models both controlled and uncontrolled agents. 
Since the controlled agents are updated during the training process, the encoder-decoder must deal with a ``moving target". Thus, in principle, the problem of modeling the controlled agents is more challenging than modeling uncontrolled agents.

Fig.~\ref{fig:ed_loss_ctrl_unctrl} shows the probability of predicting the correct \textit{action} for the uncontrolled agents and controlled agents separately, on the \texttt{mpe-pp} task. 
Note that the action probabilities shown in Fig.~\ref{fig:mpe-pp_poam_within_episode} (right) of the main paper would be the average of the two plots shown in Fig. \ref{fig:ed_loss_ctrl_unctrl}.

For uncontrolled agents (Fig. \ref{fig:ed_loss_ctrl_unctrl}, left), we observe that the accuracy of action predictions for the uncontrolled agents increases much more consistently as training goes on, and is higher than that for the controlled agents. As expected, the ED is able to model the uncontrolled agents more easily than the nonstationary controlled agents.

\subsubsection{Performance as a Function of the Number of Controlled Agents}

To provide more insight on how an NAHT team's performance changes as the number of controlled agents increases, Figure \ref{fig:varying-n-ctrl} displays the mean test returns of POAM and POAM-AHT as a function of the number of controlled agents, where the evaluation returns are averaged across five types of uncontrolled teammates: QMIX, VDN, IQL, MAPPO, and IPPO. 
Note that the self-play returns of POAM and POAM-AHT (i.e. where the number of controlled agents is maximal) are shown as horizontal lines, while the performances at $N=0$ correspond to the averaged self-play returns of the five uncontrolled team types.

Recall that POAM-AHT was trained on the $N=1$ scenario only, while POAM is trained on the full NAHT setting. As expected, POAM outperforms POAM-AHT for all values of $N>1$, while for $N=1$, the methods perform similarly. POAM-AHT's performance declines as the number of controlled agents increases, which likely occurs because the evaluation setting becomes further from the training setting as $N$ increases.

\begin{figure}[h!]
    \centering
    \includegraphics[width=1.0\textwidth]{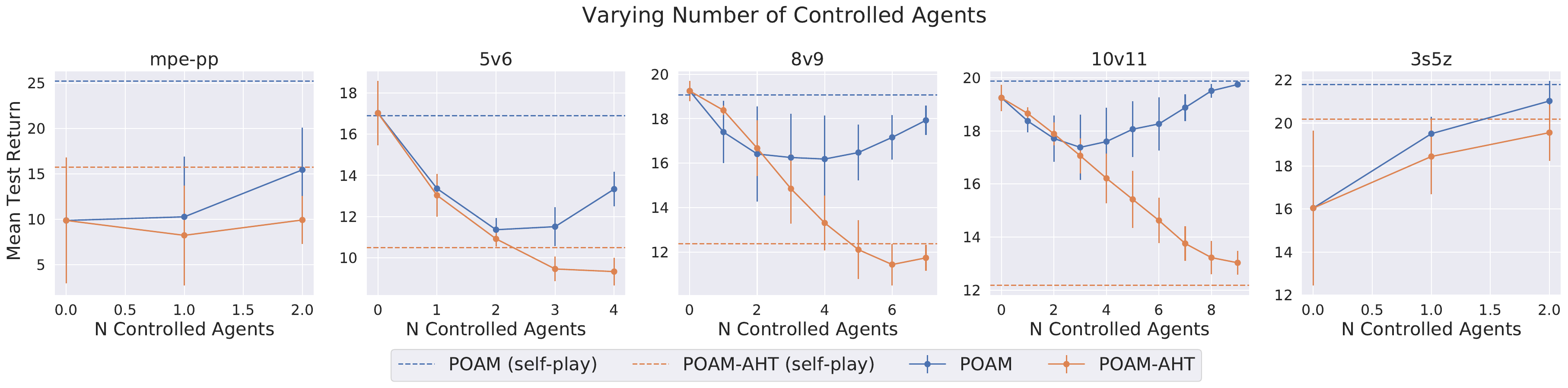}
    \caption{Comparing the performance of POAM versus POAM-AHT, as the number of controlled agents varies. POAM and POAM-AHT agents are evaluated with the following set of uncontrolled agents:  QMIX, VDN, IQL, MAPPO, IPPO.}
    \label{fig:varying-n-ctrl}
\end{figure}

\subsection{Computing Infrastructure}
\label{app:computing_infra}
All value-based algorithms (e.g. QMIX, VDN, IQL) were run without parallelizing environments, while policy gradient algorithms were run with parallelized environments. 
All methods were trained for 20M steps on all tasks. Each run took between 12-48 hours of compute, and used less than 2gB of GPU memory. Runs were parallelized to use computational resources efficiently.
The servers used for our experiments ran Ubuntu 20.04 with the following configurations:

\begin{itemize}
    \item Intel Xeon CPU E5-2630 v4;  Nvidia Titan V GPU. %
    \item Intel Xeon CPU E5-2698 v4; Nvidia Tesla V100-SXM2 GPU. %
    \item Intel Xeon Gold 6342 CPU; Nvidia A40 GPU. %
    \item Intel Xeon Gold 6342 CPU; Nvidia A100 Gpu. %
\end{itemize}

\end{document}